
\documentclass[10pt,onecolumn,letterpaper]{article}

\usepackage[pagenumbers]{cvpr} 

\usepackage{algorithm}
\usepackage{algorithmicx}
\usepackage{algpseudocode}

%
%


\usepackage{amsthm}
\theoremstyle{plain}

\newtheorem{lemma}{Lemma}
\newtheorem{proposition}{Proposition}
\newtheorem{theorem}{Theorem}
\theoremstyle{definition}
\newtheorem{definition}{Definition}
\theoremstyle{remark}

\newcommand{\E}{\mathbb{E}}
\newcommand{\Cov}{\operatorname{Cov}}
\newcommand{\Var}{\operatorname{Var}}
\newcommand{\R}{\mathbb{R}}
\newcommand{\diag}{\operatorname{diag}}

%
\definecolor{cvprblue}{rgb}{0.21,0.49,0.74}
\usepackage[breaklinks,colorlinks,allcolors=cvprblue]{hyperref}


\title{DSP-Reg: Domain-Sensitive Parameter Regularization for Robust Domain Generalization}

\author{Xudong Han${^{1,*}}$, Senkang Hu$^{2,3,*,\dagger}$ , Yihang Tao$^{2,3}$, Yu Guo$^{2,3}$, Philip Birch$^{1}$, Sam Tak Wu Kwong$^{4}$, \\Yuguang Fang$^{2,3}$\\
${^1}$University of Sussex, ${^2}$Hong Kong JC STEM Lab of Smart City, ${^3}$CITYUHK, $^{4}$Lingnan University.\\
{\tt\small \{senkang.forest, yihang.tommy, yuguo46-c\}@my.cityu.edu.hk, \{xh218, P.M.Birch\}@sussex.ac.uk,}\\
{\tt\small samkwong@ln.edu.hk,  my.Fang@cityu.edu.hk}\\
{$^*$Equal contribution, $^\dagger$ Project leader}
}

\begin{document}
\maketitle

\tableofcontents
\begin{abstract}
    Domain Generalization (DG) is a critical area that focuses on developing models capable of performing well on data from unseen distributions, which is essential for real-world applications.
    Existing approaches primarily concentrate on learning domain-invariant features, which assume that a model robust to variations in the source domains will generalize well to unseen target domains. However, these approaches neglect a deeper analysis at the parameter level, which  makes the model hard to explicitly differentiate between parameters sensitive to domain shifts and those  robust, potentially hindering its overall ability to generalize.
    In order to address these limitations, we first build a covariance-based parameter sensitivity analysis framework to quantify the sensitivity of each parameter in a model to domain shifts. By computing the covariance of parameter gradients across multiple source domains, we can identify parameters that are more susceptible to domain variations, which serves as our theoretical foundation. 
    Based on this, we propose Domain-Sensitive Parameter Regularization (DSP-Reg), a principled framework that guides model optimization by a soft regularization technique that encourages the model to rely more on domain-invariant parameters while suppressing those that are domain-specific. This approach provides a more granular control over the model's learning process, leading to improved robustness and generalization to unseen domains.
    Extensive experiments on benchmarks, such as PACS, VLCS, OfficeHome, and DomainNet, demonstrate that DSP-Reg outperforms state-of-the-art approaches, achieving an average accuracy of 66.7\% and surpassing all baselines.
    
\end{abstract}

\section{Introduction}
\label{sec:intro}
Domain Generalization (DG) remains a fundamental yet challenging problem in machine learning, addressing the critical task of training models that maintain robust predictive capabilities across unseen, out-of-distribution target domains. This capability is crucial for real-world applications such as autonomous driving \cite{huFullSceneDomainGeneralization2024,huAdaptiveCommunicationsCollaborative2024,huCollaborativePerceptionConnected2024,huAgentsCoMergeLargeLanguage2025,huAgentsCoDriverLargeLanguage2024,huFullSceneDomainGeneralization2024}, medical imaging \cite{liu2020shape}, face recognition \cite{wen2016discriminative}, and AIGC \cite{guo2025neptune-x,li2025instruct2see,guo2024onerestore, hu2025distributionaligneddecodingefficientllm}, where even slight changes in environments can induce substantial distributional shifts. In addition, standard models typically suffer significant performance degradation under such shifts, underscoring limitations to their reliability in practice.

Currently, most DG methods focus on learning domain-invariant feature representations from multiple source domains \cite{huFullSceneDomainGeneralization2024,huAdaptiveCommunicationsCollaborative2024,yang_fda_2020,zhouDomainGeneralizationSurvey2022,nguyenDomainInvariantRepresentation2022}. Representative strategies include domain-adversarial training to promote feature invariance \cite{10.5555/2946645.2946704}, reducing distributional discrepancies using metrics such as Maximum Mean Discrepancy \cite{li2018domain}, and enforcing invariant predictive relationships as in Invariant Risk Minimization (IRM) \cite{arjovsky2019invariant}. Although these methods have achieved notable progress, they primarily operate at the feature level and lack direct, fine-grained control over individual model parameters. As a result, these methods are unable to explicitly differentiate between domain-invariant and domain-sensitive components, which may limit its overall generalization capability.

To address this limitation, recent research has begun examining model parameters directly through sensitivity analysis. For example, Fisher Information matrix \cite{kirkpatrick2017overcoming} provides valuable insights into how model parameters respond differently under domain shifts. Approaches like Fisher regularization (Fishr) \cite{rame2022fishr} utilize gradient-based analyses to enforce invariance in gradient distributions across domains, thus partially addressing parameter-level domain sensitivity. Nevertheless, these approaches remain limited because of their reliance on static, local curvature estimates and are often ineffective under dynamically evolving or non-stationary domain shifts.

Motivated by the observation that existing approaches underutilize parameter-level information, we begin our investigation from the perspective of \emph{model parameter sensitivity}. 
Specifically, we propose a covariance-based parameter sensitivity analysis framework to quantify the sensitivity of each parameter to domain shifts. First, we introduce a linear perturbation model to locally approximate the change in output resulting from small changes in the input and parameters. Then, by applying the law of covariance propagation, we analyze how these uncertainties propagate through the network and affect the final output variance. Finally, we derive a single-parameter sensitivity index $s_k$ to measure the contribution of the variance of parameter $\theta_k$ to the variance of the final output; a larger $s_k$ indicates that small fluctuations in $\theta_k$ significantly affect the output, making $\theta_k$ \emph{sensitive} to inputs. Building on this theoretical foundation, we propose a novel approach named Domain-Sensitive Parameter Regularization (DSP-Reg). The key idea is that \emph{if a parameter exhibits significant sensitivity differences across different source domains, it is likely to have learned domain-specific features rather than generalizable knowledge}. We then design a soft regularization scheme that penalizes domain-sensitive parameters, encouraging the model to rely on domain-invariant parameters that perform consistently across all domains.
Our method's granular control enhances model robustness and generalization, potentially extending to other DG strategies as a complementary mechanism.
In summary, the primary contributions of this work are as
follows.
\begin{itemize}[nosep]
    \item Introducing a principled covariance-based parameter sensitivity framework to systematically quantify parameter sensitivity to domain shifts.
    \item Proposing DSP-Reg, which can adaptively penalize domain-sensitive parameters, substantially improving model generalization.
    \item Demonstrating through extensive experiments on standard DG benchmarks (PACS, VLCS, OfficeHome, TerraIncognita, DomainNet) that DSP-Reg significantly outperforms state-of-the-art approaches.
\end{itemize}

\section{Background and Related Work}
\label{sec:background_related_work}

\subsection{Domain Generalization}
Domain Generalization (DG) aims to train models capable of generalizing to unseen, out-of-distribution target domains by learning from one or multiple source datasets, which may exhibit differing data distributions. DG methods can be broadly distinguished based on the nature of the source data: single-source approaches typically operate without leveraging information about distinct domains within the training set, effectively treating all source data as a single unit, whereas multi-source algorithms explicitly utilize domain labels and exploit the statistical differences between the available source domain distributions to inform the learning process. A predominant strategy involves focusing on domain-invariant representations, where models strive to capture underlying data characteristics that are robust to domain shifts. This has been pursued through various means, including adversarial training \cite{ganin2016domain}, distribution discrepancy minimization using metrics like Maximum Mean Discrepancy (MMD) \cite{li2018domain,long2015learning, muandet2013domain}, and invariance principles such as Invariant Risk Minimization (IRM) \cite{arjovsky2019invariant,krueger2021out}. Other common approaches include extensive data augmentation \cite{carlucci2019domain, zhou2021mixstyle} to expose models to wider variations, and meta-learning \cite{balaji2018metareg,dou2019domain,li2018learning}, which trains models to adapt to new domains rapidly. Additionally, domain alignment \cite{ganin2016domain,muandet2013domain} and self-supervised learning \cite{he2020momentum} have also been proposed to focus on learning domain-invariant features by minimizing discrepancies between source domains. Recently, FisherTune \citep{zhao2025fishertune} has leveraged Vision Foundation Models (VFMs) for DG tasks, particularly in semantic segmentation, capitalizing on their robust pre-trained representations.

\subsection{Parameter Sensitivity Analysis}
Parameter Sensitivity Analysis in DG focuses on understanding how model parameters respond to domain shifts and using this insight to enhance generalization. By identifying parameters sensitive to domain changes, models can be designed to prioritize domain-invariant features. Fisher Information-based metrics, introduced by Elastic Weight Consolidation (EWC) \cite{kirkpatrick2017overcoming}, provide insights into parameter importance by quantifying sensitivity to loss perturbations. Such analyses reveal the normalisation layers, particularly BatchNorm parameters, as highly sensitive to domain shifts \cite{wang2019easy}. Strategies addressing parameter sensitivity include adaptive recalibration methods (AdaBN) and targeted regularization techniques like DomainDrop~\cite{guo2023domaindrop}, which selectively masks domain-specific neurons. Additionally, gradient alignment techniques \cite{rame2022fishr} ensure that parameter updates are consistent across domains, reducing sensitivity and enhancing robustness. FisherTune \cite{zhao2025fishertune} leverages parameter sensitivity metrics to selectively fine-tune vision foundation models, achieving improved robustness to domain shifts without compromising pretrained knowledge.

\section{Covariance-Based Parameter Sensitivity Analysis}\label{sec:sensitivity-cov}

As stated in the introduction, our goal is to build a robust domain generalization framework from the perspective of model parameter sensitivity, and our key idea is that \emph{if a parameter exhibits significant sensitivity differences across different source domains, it is likely to have learned domain-specific features rather than generalizable knowledge}. The key point here is how to quantify the sensitivity of a parameter to domain shifts. In this section, we present a covariance-based parameter sensitivity analysis framework. It includes three main steps: 1) First, we formulate a linearized perturbation model that locally approximates the mapping from parameter perturbations to output variations. 2) Second, we apply the law of propagation of second-order moments to analyze how parameter uncertainty (variance) propagates through the network and ultimately impacts the stability of the output. 3) Finally, based on this analysis, we derive a clear single-parameter sensitivity index ($s_k$) and reveal its deep connection to the classical Fisher Information, providing a unified perspective for the entire framework.

\subsection{Linearised Perturbation Model}
\label{sec:linear-perturb}

We begin by relating domain-induced variability to a local, first-order model of how input and parameters jointly perturb the network output. Formally, consider a deep neural network (DNN)
$f : \mathbb{R}^{d_x} \times \mathbb{R}^{d_\theta} \;\longrightarrow\; \mathbb{R}^{d_y}$,
with parameters $\mathbf{\theta}\in\mathbb{R}^{d_\theta}$ evaluated at a nominal operating point $(\mathbf{x}_0,\mathbf{\theta}_0)$.  Let
\begin{align}
J_x &\;:=\; \left.\frac{\partial f_{\mathbf{\theta}}}{\partial \mathbf{x}}\right|_{(\mathbf{x}_0,\mathbf{\theta}_0)} \in \mathbb{R}^{d_y\times d_x},\\
J_{\mathbf{\theta}} &\;:=\; \left.\frac{\partial f_{\mathbf{\theta}}}{\partial \mathbf{\theta}}\right|_{(\mathbf{x}_0,\mathbf{\theta}_0)} \in \mathbb{R}^{d_y\times d_\theta}
\end{align}
denote the Jacobians with respect to the input and parameters, respectively.

For infinitesimal perturbations $(\delta_\mathbf{x},\delta_{\mathbf{\theta}})$, a first-order Taylor expansion yields
\begin{equation}\label{eq:lin-perturb}
\delta_\mathbf{y} \approx J_x\,\delta_\mathbf{x} + J_{\mathbf{\theta}}\,\delta_{\mathbf{\theta}}.
\end{equation}

\paragraph{Derivation.}  Write the perturbed output explicitly as
\begin{equation}
\begin{aligned}
\delta_\mathbf{y}
&:= f_{\mathbf{\theta}_0+\delta_{\mathbf{\theta}}}(\mathbf{x}_0+\delta_\mathbf{x}) - f_{\mathbf{\theta}_0}(\mathbf{x}_0).\\[4pt]
&= f_{\mathbf{\theta}_0}(\mathbf{x}_0)
  + \left.\frac{\partial f}{\partial \mathbf{x}}\right|_{(\mathbf{x}_0,\mathbf{\theta}_0)}\delta_\mathbf{x}
  + \left.\frac{\partial f}{\partial \mathbf{\theta}}\right|_{(\mathbf{x}_0,\mathbf{\theta}_0)}\delta_{\mathbf{\theta}}\\[-2pt]
  &+ \underbrace{\mathcal{O}\Bigl(\lVert\delta_\mathbf{x}\rVert^2 + \lVert\delta_{\mathbf{\theta}}\rVert^2\Bigr)}_{\text{higher order}} - f_{\mathbf{\theta}_0}(\mathbf{x}_0).\\[4pt]
&\approx J_\mathbf{x}\,\delta_\mathbf{x} + J_\mathbf{\theta}\,\delta_{\mathbf{\theta}},
\end{aligned}
\end{equation}
where $J_x$ and $J_{\mathbf{\theta}}$ are the input and parameter Jacobians defined above.  The approximation discards second and higher-order terms for \emph{infinitesimal} perturbations.

\subsection{Propagation of Second-Order Moments}
\label{sec:cov-propagation}
To separate the contributions of input and parameter variability to output stability, we propagate second-order moments through the linear relation derived in Section~\ref{sec:linear-perturb}. 
\begin{theorem}
Assuming the perturbations are centered and independent,
$\delta_\mathbf{x}\sim\mathcal N(\mathbf 0,\,\mathbf{\Sigma}_\mathbf{x})$,
$\delta_{\mathbf{\theta}}\sim\mathcal N(\mathbf 0,\,\mathbf{\Sigma}_{\mathbf{\theta}})$,
$\mathrm{Cov}(\delta_\mathbf{x},\delta_{\mathbf{\theta}})=\mathbf 0$.
Taking the covariance of Eq.~\eqref{eq:lin-perturb} gives the \emph{covariance propagation law}
\begin{equation}\label{eq:cov-propagation}
\mathrm{Cov}(\delta_\mathbf{y})
\approx
J_\mathbf{x}\,\mathbf{\Sigma}_\mathbf{x}\,J_\mathbf{x}^\top
+
J_{\mathbf{\theta}}\,\mathbf{\Sigma}_{\mathbf{\theta}}\,J_{\mathbf{\theta}}^\top.
\end{equation}
\end{theorem}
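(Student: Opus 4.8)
The plan is to treat Eq.~\eqref{eq:lin-perturb} as an affine map of the random perturbations and apply the bilinearity of the covariance operator; the approximation sign in Eq.~\eqref{eq:cov-propagation} is then inherited verbatim from the linearization error already isolated in Eq.~\eqref{eq:lin-perturb}, so no new approximation is introduced. First I would note that, because both perturbations are centered, the output perturbation is centered to first order, $\mathbb{E}[\delta_\mathbf{y}] \approx J_\mathbf{x}\,\mathbb{E}[\delta_\mathbf{x}] + J_{\mathbf{\theta}}\,\mathbb{E}[\delta_{\mathbf{\theta}}] = \mathbf{0}$, so that $\mathrm{Cov}(\delta_\mathbf{y}) = \mathbb{E}[\delta_\mathbf{y}\,\delta_\mathbf{y}^\top]$ carries no mean-correction term.

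The main computation is to expand the outer product of the linear form into its four blocks,
\[
\begin{aligned}
\delta_\mathbf{y}\,\delta_\mathbf{y}^\top &\approx J_\mathbf{x}\,\delta_\mathbf{x}\,\delta_\mathbf{x}^\top J_\mathbf{x}^\top + J_{\mathbf{\theta}}\,\delta_{\mathbf{\theta}}\,\delta_{\mathbf{\theta}}^\top J_{\mathbf{\theta}}^\top \\
&\quad + J_\mathbf{x}\,\delta_\mathbf{x}\,\delta_{\mathbf{\theta}}^\top J_{\mathbf{\theta}}^\top + J_{\mathbf{\theta}}\,\delta_{\mathbf{\theta}}\,\delta_\mathbf{x}^\top J_\mathbf{x}^\top,
\end{aligned}
\]
and then take expectations, pulling the deterministic Jacobians outside by linearity. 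The two diagonal blocks give $J_\mathbf{x}\,\mathbb{E}[\delta_\mathbf{x}\,\delta_\mathbf{x}^\top]\,J_\mathbf{x}^\top = J_\mathbf{x}\,\mathbf{\Sigma}_\mathbf{x}\,J_\mathbf{x}^\top$ and $J_{\mathbf{\theta}}\,\mathbb{E}[\delta_{\mathbf{\theta}}\,\delta_{\mathbf{\theta}}^\top]\,J_{\mathbf{\theta}}^\top = J_{\mathbf{\theta}}\,\mathbf{\Sigma}_{\mathbf{\theta}}\,J_{\mathbf{\theta}}^\top$, since for a centered vector the second moment coincides with the covariance.

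The only place a hypothesis beyond linearity enters is the vanishing of the two off-diagonal blocks: the zero cross-covariance assumption gives $\mathbb{E}[\delta_\mathbf{x}\,\delta_{\mathbf{\theta}}^\top] = \mathrm{Cov}(\delta_\mathbf{x},\delta_{\mathbf{\theta}}) = \mathbf{0}$, so both cross terms collapse and Eq.~\eqref{eq:cov-propagation} follows. I would close with two remarks on rigor. The identity in fact requires only centeredness and uncorrelatedness of the two perturbations, so the stated Gaussian assumption is stronger than needed and is best read as a modeling convenience for interpreting the variances downstream rather than a necessity here. Moreover, the approximate equality is not an artifact of this step but is precisely the $\mathcal{O}(\lVert\delta_\mathbf{x}\rVert^2 + \lVert\delta_{\mathbf{\theta}}\rVert^2)$ Taylor truncation from Eq.~\eqref{eq:lin-perturb}; hence the only genuine obstacle, were one to seek a non-asymptotic version, is to control how that truncation propagates into the second moment, which demands a bound on the Hessian of $f$ near $(\mathbf{x}_0,\mathbf{\theta}_0)$ rather than any further probabilistic reasoning.
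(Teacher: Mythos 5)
Your argument is correct and is essentially the paper's own proof: the paper invokes the standard bilinearity formula $\mathrm{Cov}(Au+Bv)=A\,\mathrm{Cov}(u)A^{\top}+A\,\mathrm{Cov}(u,v)B^{\top}+B\,\mathrm{Cov}(v,u)A^{\top}+B\,\mathrm{Cov}(v)B^{\top}$ and kills the cross terms by independence, while you simply derive that same four-block identity by expanding $\mathbb{E}[\delta_\mathbf{y}\delta_\mathbf{y}^{\top}]$ directly. Your added observations --- that Gaussianity is unnecessary (uncorrelatedness suffices) and that the $\approx$ is inherited entirely from the Taylor truncation --- are accurate refinements, not deviations.
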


\begin{proof}
Starting from the linear approximation in Eq.~\eqref{eq:lin-perturb}, write
\begin{equation}
\delta_\mathbf{y} = J_\mathbf{x}\,\delta_\mathbf{x} + J_\mathbf{\theta}\,\delta_{\mathbf{\theta}}.
\end{equation}
Apply the standard formula for the covariance of a linear combination of random vectors:
\begin{equation}
\begin{aligned}
\mathrm{Cov}(A u + B v) &= A\,\mathrm{Cov}(u)A^{\top} + A\,\mathrm{Cov}(u,v)B^{\top} \\
&+ B\,\mathrm{Cov}(v,u)A^{\top} + B\,\mathrm{Cov}(v)B^{\top}.
\end{aligned}
\end{equation}
Here $u = \delta_\mathbf{x}$, $v = \delta_{\mathbf{\theta}}$, $A = J_\mathbf{x}$, $B = J_\mathbf{\theta}$.  Because the perturbations are assumed centered and independent, the cross-covariance terms vanish:
\begin{equation}
\mathrm{Cov}(\delta_\mathbf{x}, \delta_{\mathbf{\theta}})=\mathbf 0.
\end{equation}
Substituting \(\mathrm{Cov}(\delta_\mathbf{x}) = \mathbf{\Sigma}_\mathbf{x}\) and \(\mathrm{Cov}(\delta_{\mathbf{\theta}})=\mathbf{\Sigma}_{\mathbf{\theta}}\) therefore yields
\begin{equation}
\mathrm{Cov}(\delta_\mathbf{y}) = J_\mathbf{x}\,\mathbf{\Sigma}_\mathbf{x}\,J_\mathbf{x}^{\top} + J_\mathbf{\theta}\,\mathbf{\Sigma}_{\mathbf{\theta}}\,J_\mathbf{\theta}^{\top},
\end{equation}
which is the claimed result in Eq.~\eqref{eq:cov-propagation}.  The first term quantifies \emph{how input noise propagates through the network}, whereas the second isolates \emph{parameter-induced uncertainty}.

\end{proof}
\subsection{Per-Parameter Sensitivity Index}
\label{sec:per-param-sensitivity}

Having isolated the parameter-induced component of output variability in Section~\ref{sec:cov-propagation}, we now need to understand how each individual parameter contributes to the second term. To do this, we further decompose the parameter covariance.

Formally, let $\mathbf{\Sigma}_{\mathbf{\theta}}=\operatorname{diag}(\operatorname{Var}(\theta_1),\ldots,\operatorname{Var}(\theta_{d_\theta}))$, and denote the $k$-th column of $J_{\mathbf{\theta}}$ as $\mathbf{j}_k := \partial_{\theta_k}f_{\mathbf{\theta}}(\mathbf{x})\in\mathbb{R}^{d_y}$. Then, we have 
\begin{equation}
  J_{\mathbf{\theta}}\,\mathbf{\Sigma}_{\mathbf{\theta}}\,J_{\mathbf{\theta}}^{\top} 
  = \sum_{k=1}^{d_{\theta}} \operatorname{Var}(\theta_k)\,\mathbf{j}_k\mathbf{j}_k^{\top}.
  \end{equation}
This shows that the total parameter-induced output variance is an additive sum of contributions from each parameter direction. 
Each term $\mathbf{j}_k\mathbf{j}_k^{\top}$ is a rank-1 outer product matrix describing the direction and magnitude by which perturbations of $\theta_k$ affect the output at a specific input $\mathbf{x}$.

A standard scalarization of the output covariance is the total output variance. Therefore, we can obtain it by trace operation:
\vspace{-5mm}
\begin{equation}
  \begin{aligned}
\operatorname{Tr}\big(J_{\mathbf{\theta}}\,\mathbf{\Sigma}_{\mathbf{\theta}}\,J_{\mathbf{\theta}}^{\top} \big) &=\sum_{k=1}^{d_\theta}\operatorname{Var}(\theta_k)\operatorname{Tr}\big(\mathbf{j}_k\mathbf{j}_k^{\top}\big)\\
&=\sum_{k=1}^{d_\theta}\operatorname{Var}(\theta_k)\|\mathbf{j}_k\|_2^2,
\end{aligned}
\label{eq:total-output-variance}
\end{equation}
This equation shows that the local parameter-induced total variance at sample $\mathbf{x}$ is an additive sum of independent per-parameter terms. This naturally identifies the local contribution of $\theta_k$ as 
\begin{equation*}
\delta_{v_k}(\mathbf{x})   := \operatorname{Var}(\theta_k)\|\mathbf{j}_k(\mathbf{x})\|_2^2\\
=\operatorname{Var}(\theta_k)\|\partial_{\theta_k} f_{\mathbf{\theta}}(\mathbf{x})\|_2^2.
\end{equation*}
\emph{Intuition:} $\|\mathbf{j}_k(\mathbf{x})\|_2^2$ measures how strongly the output reacts at $\mathbf{x}$ to an infinitesimal change in $\theta_k$; $\operatorname{Var}(\theta_k)$ determines the typical magnitude of that perturbation. Their product is the \emph{mean-square amplification} of the noise in $\theta_k$ at $\mathbf{x}$.

Averaging the local contribution over the data distribution $\mathcal{D}$ yields the expected contribution of $\theta_k$ to output variance:
\begin{equation}\label{eq:s_k}
  s_k := \operatorname{Var}(\theta_k)
  \,\mathbb E_{\mathbf{x}\sim\mathcal D}
  \Bigl[\|\partial_{\theta_k}f_{\mathbf{\theta}}(\mathbf{x})\|_2^2\Bigr].
\end{equation}
Here, $s_k$ measures the expected contribution of parameter $\theta_k$'s variance to the output variance after amplification through the network. Large $s_k$ implies that small stochastic fluctuations in $\theta_k$ translate into comparatively large output variance, hence $\theta_k$ is \emph{important} (sensitive).
For scalar outputs $(d_y=1)$, $\|\mathbf{j}_k(\mathbf{x})\|_2^2=(\partial_{\theta_k}f_{\mathbf{\theta}}(\mathbf{x}))^2$, so Eq.~\eqref{eq:s_k} reduces to the familiar one-dimensional form. For vector outputs, the Euclidean norm aggregates sensitivities over output coordinates in a basis-invariant way.

\textbf{Practical computation.}  In practice, $s_k$ can be estimated with a single backward pass per mini-batch by accumulating squared gradients weighted by $\operatorname{Var}(\theta_k)$.
\subsection{Connection to Fisher Information}\label{sec:fisher-connection}

The per-parameter sensitivity index derived in Eq.~\eqref{eq:s_k} quantifies how the variance of each parameter contributes to the overall output uncertainty. To better understand its statistical meaning, we now connect this sensitivity measure to the classical concept of Fisher Information, which describes how sensitively the model likelihood responds to infinitesimal parameter perturbations, and thus \emph{serves as a natural bridge between sensitivity analysis and information theory}.
\subsubsection{Approximation of Diagonal Fisher via Gradient Magnitude}

Let $p_{\mathbf{\theta}}(\mathbf{y}|\mathbf{x})$ be the probabilistic output model associated with $f_{\mathbf{\theta}}$.  
The \emph{score function} is
$u_k(\mathbf{x},\mathbf{y};\mathbf{\theta})
\;:=\;\partial_{\theta_k}\log p_{\mathbf{\theta}}(\mathbf{y}|\mathbf{x})$,
whose variance under the data distribution defines the Fisher information matrix (FIM) \cite{kirkpatrick2017overcoming}:
\begin{equation}\label{eq:fim}
\mathcal I_{ij}(\mathbf{\theta})
:=
\mathbb E_{(\mathbf{x},\mathbf{y})\sim\mathcal D}
\bigl[u_i\,u_j\bigr].
\end{equation}

For common choices of $p_{\mathbf{\theta}}$ (softmax for classification, homoscedastic Gaussian for regression), the score factors as
$u_k = T(\mathbf{x},\mathbf{y};\mathbf{\theta})\,\partial_{\theta_k}f_{\mathbf{\theta}}(\mathbf{x})$,
where $T$ is a task-dependent scalar independent of $k$.  Neglecting the variation of $T$ across samples and applying the Generalized Gauss-Newton approximation, one obtains the proportionality
\begin{equation}\label{eq:fisher-diag}
\mathcal I_{kk}\propto\mathbb E_{\mathbf{x}}\bigl[(\partial_{\theta_k}f_{\mathbf{\theta}}(\mathbf{x}))^2\bigr],\;(k=1,\dots,d_{\theta}).
\end{equation}

\begin{theorem}
Comparing Eq.~\eqref{eq:s_k} with Eq.~\eqref{eq:fisher-diag} yields 
\vspace{-1mm}
\begin{equation}\label{eq:s_k-fisher}
  \vspace{-1mm}
s_k = \operatorname{Var}(\theta_k)\,\mathcal I_{kk}.
\end{equation}
which shows that $s_k$ scales the Fisher information by the \emph{prior} parameter uncertainty, converting statistical information content into expected output variance contribution. If $\operatorname{Var}(\theta_k)=1$ (or a constant), then $s_k$ is directly proportional to the diagonal Fisher information.
\end{theorem}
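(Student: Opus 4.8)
The plan is to obtain the identity by direct substitution, since the two quantities being compared are both expectations of the squared parameter gradient and differ only by a scalar prefactor. First I would recall the definition of the sensitivity index in Eq.~\eqref{eq:s_k}, namely $s_k = \operatorname{Var}(\theta_k)\,\mathbb E_{\mathbf{x}\sim\mathcal D}[\|\partial_{\theta_k}f_{\mathbf{\theta}}(\mathbf{x})\|_2^2]$, and isolate the gradient-magnitude factor $\mathbb E_{\mathbf{x}}[\|\partial_{\theta_k}f_{\mathbf{\theta}}(\mathbf{x})\|_2^2]$ as the object shared with the Fisher approximation. This makes clear that no new computation is required: the theorem is an algebraic matching of the prefactors already isolated earlier in the section.

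Next I would invoke the diagonal-Fisher approximation established in Eq.~\eqref{eq:fisher-diag}, which rewrites $\mathcal I_{kk}$ as proportional to $\mathbb E_{\mathbf{x}}[(\partial_{\theta_k}f_{\mathbf{\theta}}(\mathbf{x}))^2]$. For the scalar-output case $d_y=1$ the two squared-gradient expressions coincide exactly, since $\|\mathbf{j}_k(\mathbf{x})\|_2^2 = (\partial_{\theta_k}f_{\mathbf{\theta}}(\mathbf{x}))^2$ as noted after Eq.~\eqref{eq:total-output-variance}; for vector outputs the Euclidean aggregation in Eq.~\eqref{eq:s_k} plays the role of the per-coordinate sum appearing inside the Generalized Gauss-Newton form of the Fisher matrix. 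Substituting the Fisher expression for the gradient-magnitude factor into the definition of $s_k$ then yields $s_k = \operatorname{Var}(\theta_k)\,\mathcal I_{kk}$, and reading off the prefactor makes the interpretation immediate: $s_k$ is the diagonal Fisher information rescaled by the prior variance of $\theta_k$, collapsing to a quantity proportional to $\mathcal I_{kk}$ whenever $\operatorname{Var}(\theta_k)$ is held constant.

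The main obstacle is bookkeeping the proportionality constant rather than any deep analytic step. Equation~\eqref{eq:fisher-diag} is only a proportionality because it rests on the factorization $u_k = T(\mathbf{x},\mathbf{y};\mathbf{\theta})\,\partial_{\theta_k}f_{\mathbf{\theta}}(\mathbf{x})$ together with neglecting the sample-to-sample variation of the task-dependent scalar $T$ and the Gauss-Newton step. To present Eq.~\eqref{eq:s_k-fisher} as a genuine equality I would either absorb the constant $\mathbb E[T^2]$ into the definition of $\mathcal I_{kk}$ (equivalently, work with the Gauss-Newton Fisher under a fixed output-noise model such as unit-variance Gaussian regression), or state the result up to this constant. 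I would be explicit that the identity is exact only under these modeling assumptions, so the content of the theorem is the \emph{structural} correspondence between parameter sensitivity and Fisher information, with the prior variance $\operatorname{Var}(\theta_k)$ serving as the bridging factor, rather than a new inequality or limiting statement that would require controlling remainder terms.
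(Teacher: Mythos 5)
Your proposal matches the paper's proof: both arguments factorize the score as $u_k = T\,\partial_{\theta_k}f_{\mathbf{\theta}}(\mathbf{x})$ and then treat the identity as a direct comparison of the shared gradient-energy factor $\mathbb E_{\mathbf{x}}\bigl[(\partial_{\theta_k}f_{\mathbf{\theta}}(\mathbf{x}))^2\bigr]$ appearing in Eq.~\eqref{eq:s_k} and in the Gauss--Newton approximation of the Fisher diagonal. Your explicit bookkeeping of the constant $\mathbb E[T^2]$ is in fact more careful than the paper's own argument, which derives only the proportionality $\mathcal I_{kk}\propto\mathbb E_{\mathbf{x}}\bigl[(\partial_{\theta_k}f_{\mathbf{\theta}}(\mathbf{x}))^2\bigr]$ yet states the conclusion as the exact equality $s_k=\operatorname{Var}(\theta_k)\,\mathcal I_{kk}$ without absorbing or normalizing that constant.
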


\begin{proof}
We derive the connection between Fisher information and gradient energies in steps. First, for common probabilistic models, the score function can be factorized as
\begin{equation}
  \begin{aligned}
u_k(\mathbf{x},\mathbf{y};\mathbf{\theta})  &= \partial_{\theta_k}\log p_{\mathbf{\theta}}(\mathbf{y}\mid\mathbf{x}) \\
&= T(\mathbf{x},\mathbf{y};\mathbf{\theta})\,\partial_{\theta_k}f_{\mathbf{\theta}}(\mathbf{x}),
\end{aligned}
\end{equation}
Then, we have the diagonal entries of the Fisher information matrix as
\begin{equation*}
\begin{aligned}
\mathcal I_{kk}(\mathbf{\theta}) &= \mathbb{E}_{(\mathbf{x},\mathbf{y})\sim\mathcal{D}}\bigl[u_k^2\bigr]\\
&= \mathbb{E}_{(\mathbf{x},\mathbf{y})\sim\mathcal{D}}\bigl[T^2(\mathbf{x},\mathbf{y};\mathbf{\theta})\,(\partial_{\theta_k}f_{\mathbf{\theta}}(\mathbf{x}))^2\bigr]\\
&= \mathbb{E}_{(\mathbf{x},\mathbf{y})\sim\mathcal{D}}\bigl[T^2(\mathbf{x},\mathbf{y};\mathbf{\theta})\bigr]\,\mathbb{E}_{\mathbf{x}\sim\mathcal{D}}\bigl[(\partial_{\theta_k}f_{\mathbf{\theta}}(\mathbf{x}))^2\bigr],
\end{aligned}
\end{equation*}
where the last step assumes independence between $T^2$ and the gradient term (valid when $T$ depends primarily on the output-target mismatch rather than the specific parameter gradients).

Finally, neglecting the variation of $T^2$ across samples and treating $\mathbb{E}[T^2]$ as a constant $C$ independent of $k$, we obtain
\begin{equation}\label{eq:fisher-diag-approx}
\mathcal I_{kk} \approx C \cdot \mathbb{E}_{\mathbf{x}}\bigl[(\partial_{\theta_k}f_{\mathbf{\theta}}(\mathbf{x}))^2\bigr] \propto \mathbb{E}_{\mathbf{x}}\bigl[(\partial_{\theta_k}f_{\mathbf{\theta}}(\mathbf{x}))^2\bigr].
\end{equation}
Comparing the sensitivity index \eqref{eq:s_k} with the Fisher diagonal \eqref{eq:fisher-diag}, we immediately obtain Eq.~\eqref{eq:s_k-fisher} $s_k = \operatorname{Var}(\theta_k)\,\mathcal I_{kk}$.
\end{proof}

\subsubsection{Interpretation}
Eq.~\eqref{eq:s_k-fisher} shows that the covariance-based sensitivity index $s_k$ converts the {information content} about $\theta_k$ (given by Fisher) into an {expected variance contribution} at the network output by scaling with the a priori uncertainty in $\theta_k$.  Consequently,
\begin{itemize}[nosep]
\item $s_k$ unifies \textbf{statistical} importance (Fisher) with \textbf{Bayesian} uncertainty ($\mathbf{\Sigma}_{\mathbf{\theta}}$).
\item It provides a principled weight for pruning, continual learning regularisers, and domain-invariant parameter selection.
\end{itemize}
\textbf{Impact and Contributions.} The presented framework thus bridges classical information-theoretic analysis with modern large-scale neural modelling in a numerically tractable way.
In addition, it connects the Bayesian uncertainty of parameters with their Fisher information content, thereby deriving a clear sensitivity index for each parameter. This analysis not only provides a novel perspective for understanding model behavior, but also serves as the theoretical foundation for the domain-sensitive parameter regularization method proposed in the following sections.
\section{Domain-Sensitive Parameter Regularization}
\label{sec:method}

\begin{algorithm}[t]
    \caption{Training Algorithm}
    \label{alg:di-training}
    \begin{algorithmic}[1]
    \Require Source domains $\{\mathcal{D}_d\}_{d=1}^D$, regularization strength $\lambda$, update frequency $T_{\mathrm{update}}$
    \Ensure Trained model $f_{\mathbf{\theta}}$ with domain-invariant parameter emphasis
    \State Initialize model parameters $\mathbf{\theta}$
    \State $c \leftarrow \mathbf{1}$ \Comment{Initialize uniform coefficients}
    \For{epoch $= 1, 2, \ldots$}
        \If{epoch $\bmod T_{\mathrm{update}} = 0$}
            \State Compute $\{s_k^{(d)}\}$ for all domains using Eq.~\eqref{eq:domain-sensitivity}
            \State Update $c_k$ using Eq.~\eqref{eq:mean-sensitivity}, \eqref{eq:var-sensitivity}, and \eqref{eq:cv-sensitivity}
        \EndIf
        \For{mini-batch $(\mathbf{x}, \mathbf{y})$ from mixed source domains}
            \State Compute supervised loss $\mathcal{L}_{\mathrm{sup}}$
            \State Compute gradients $\mathbf{g} = \nabla_{\mathbf{\theta}} \mathcal{L}_{\mathrm{sup}}$
            \State Compute regularization $\mathcal{R}_{\mathrm{DS}} = \sum_k c_k \cdot g_k^2$
            \State Update $\mathbf{\theta} \leftarrow \mathbf{\theta} - \eta \nabla_{\mathbf{\theta}}(\mathcal{L}_{\mathrm{sup}} + \lambda \mathcal{R}_{\mathrm{DS}})$
        \EndFor
    \EndFor
    \end{algorithmic}
    \end{algorithm}

Building upon the covariance-based parameter sensitivity framework established in Section~\ref{sec:sensitivity-cov}, we now present a principled approach to identify domain-sensitive parameters for robust domain generalization. Our method leverages the insight that parameters exhibiting consistent sensitivity across source domains are more likely to capture domain-agnostic features essential for generalization.

\subsection{Problem Formulation}

Consider a domain generalization setting with $D$ source domains $\{\mathcal{D}_d\}_{d=1}^D$, where each domain $\mathcal{D}_d$ represents a distinct data distribution over input-output pairs $(\mathbf{x}, \mathbf{y})$. Our goal is to learn a model $f_{\mathbf{\theta}}$ that generalizes well to an unseen target domain $\mathcal{D}_{\mathrm{target}}$ by identifying and preserving domain-invariant parameters while allowing domain-specific adaptation for the remaining parameters.

\subsection{Cross-Domain Sensitivity Analysis}

\textbf{Per-Domain Sensitivity Computation.} For each source domain $\mathcal{D}_d$, we compute the parameter sensitivity index for the $k$-th parameter as:
\begin{equation}\label{eq:domain-sensitivity}
s_k^{(d)} := \operatorname{Var}(\theta_k) \, \mathbb{E}_{\mathbf{x} \sim \mathcal{D}_d} \bigl[(\partial_{\theta_k} f_{\mathbf{\theta}}(\mathbf{x}))^2\bigr],
\end{equation}
where we assume $\operatorname{Var}(\theta_k) = 1$ for simplicity (or use empirical parameter variance if available).

In practice, this is efficiently computed as:
\begin{equation}
s_k^{(d)} \approx \frac{1}{|\mathcal{B}_d|} \sum_{\mathbf{x} \in \mathcal{B}_d} (\partial_{\theta_k} \mathcal{L}(\mathbf{x}, \mathbf{y}))^2,
\end{equation}
where $\mathcal{B}_d$ represents mini-batches sampled from domain $\mathcal{D}_d$, and $\mathcal{L}$ is the supervised loss function.

\textbf{Domain-Invariance Quantification.} To measure how consistently a parameter behaves across domains, we compute cross-domain statistics:
\begin{align}
\bar{s}_k &:= \frac{1}{D} \sum_{d=1}^D s_k^{(d)}, \label{eq:mean-sensitivity}\\
v_k &:= \frac{1}{D} \sum_{d=1}^D (s_k^{(d)} - \bar{s}_k)^2, \label{eq:var-sensitivity}\\
c_k &:= \frac{\sqrt{v_k}}{\bar{s}_k + \epsilon}, \label{eq:cv-sensitivity}
\end{align}
where $\bar{s}_k$ is the mean sensitivity, $v_k$ is the variance across domains, and $c_k$ is the coefficient of variation. The parameter $\epsilon > 0$ is a small constant to avoid division by zero.

\textbf{Interpretation:} A small coefficient of variation $c_k$ indicates that parameter $\theta_k$ exhibits consistent sensitivity across all source domains, suggesting it captures domain-invariant features. Conversely, large $c_k$ values indicate domain-specific sensitivity patterns.

\subsection{Soft Regularization for Domain Generalization}

Rather than hard thresholding to select domain-invariant parameters, we employ a soft regularization approach that continuously penalizes domain-sensitive parameters during training.

\textbf{Regularized Objective.} We augment the standard empirical risk minimization objective with a domain-sensitivity-aware regularization term:
\begin{equation}\label{eq:regularized-loss}
\mathcal{L}_{\mathrm{total}} = \mathcal{L}_{\mathrm{sup}} + \lambda \mathcal{R}_{\mathrm{DS}}(\mathbf{\theta}),
\end{equation}
where $\mathcal{L}_{\mathrm{sup}}$ is the supervised loss across all source domains, and $\mathcal{R}_{\mathrm{DS}}(\mathbf{\theta})$ is our domain-sensitive regularizer:
\begin{equation}\label{eq:di-regularizer}
\mathcal{R}_{\mathrm{DS}}(\mathbf{\theta}) := \sum_{k=1}^{d_\theta} c_k \cdot (\partial_{\theta_k} \mathcal{L}_{\mathrm{sup}})^2.
\end{equation}

\textbf{Discussion:} This regularizer applies stronger penalties to parameters with high cross-domain sensitivity variation ($c_k$), effectively encouraging the optimization to rely more heavily on domain-invariant parameters while suppressing domain-specific adaptations. In addition, the sensitivity coefficients $\{c_k\}$ are updated periodically (every $T_{\text{update}}$ iterations) rather than at each iteration to balance computational cost with adaptation to changing parameter importance. The gradient computation for $\mathcal{R}_{\text{DS}}$ adds minimal overhead as it reuses gradients already computed for the supervised loss.

\begin{table*}[t]
	\begin{center}
	
		\caption{\textbf{Comparison with state-of-the-art domain generalization methods.} Out-of-domain accuracies on five domain generalization benchmarks are shown. Top performing methods are highlighted in \textbf{bold} while second-best are \textit{underlined}. Our experiments are repeated three times.}

	\vspace{-1em}
	\begin{tabular}{p{3cm}|ccccc|c}
	    \toprule
		Algorithm & PACS          & VLCS          & OfficeHome    & {TerraInc}    &DomainNet  &  {Avg.} \\
		\midrule
		MMD \cite{li2018domain}                  & 
		84.7\scriptsize{$\pm0.5$}          
		 & 
		77.5\scriptsize{$\pm0.9$}            
		 & 
		66.3\scriptsize{$\pm0.1$}          
		 & 
		42.2\scriptsize{$\pm1.6$}            
		   &      
		23.4\scriptsize{$\pm9.5$}           &  58.8 \\
        MLDG \cite{li2018learning}                & 
		84.9\scriptsize{$\pm1.0$}          
		 &
		77.2\scriptsize{$\pm0.4$}          
		 & 
		66.8\scriptsize{$\pm0.6$}          
		 &
		47.7\scriptsize{$\pm0.2$}          
		   &
		41.2\scriptsize{$\pm0.1$}            & 63.6  \\

        IRM \cite{arjovsky2019invariant}            & 
		83.5\scriptsize{$\pm0.8$}          
		& 
		78.5\scriptsize{$\pm0.5$}          
		 & 
		64.3\scriptsize{$\pm2.2$}          
		& 
		47.6\scriptsize{$\pm0.8$}          
		       &
		33.9\scriptsize{$\pm2.8$}          & 61.6 \\
		Mixstyle \cite{zhou2021mixstyle}     & 
		85.2\scriptsize{$\pm0.3$} 
		         & 
		77.9\scriptsize{$\pm0.5$}           & 60.4\scriptsize{$\pm0.3$}           & 44.0\scriptsize{$\pm0.7$}
		                   &
		34.0\scriptsize{$\pm0.1$}           & 60.3 \\
		
		GroupDRO \cite{sagawadistributionally}    & 
		84.4\scriptsize{$\pm0.8$}          
		   & 
		76.7\scriptsize{$\pm0.6$}          
		   & 
		66.0\scriptsize{$\pm0.7$}             
		   & 
		43.2\scriptsize{$\pm1.1$}              
		           &
		33.3\scriptsize{$\pm0.2$}          & 60.7 \\
		
		ARM \cite{zhang2020adaptive}               &
		85.1\scriptsize{$\pm0.4$}          
		& 
		77.6\scriptsize{$\pm0.3$} 
		           & 
		64.8\scriptsize{$\pm0.3$} 
		           & 
		45.5\scriptsize{$\pm0.3$} 
		                    &
		35.5\scriptsize{$\pm0.2$}          & 61.7 \\
		
		MTL \cite{blanchard2021domain}    & 
		84.6\scriptsize{$\pm0.5$}           
		 & 
		77.2\scriptsize{$\pm0.4$}          
		 & 
		66.4\scriptsize{$\pm0.5$}          
		 & 
		45.6\scriptsize{$\pm1.2$}          
		        &
		40.6\scriptsize{$\pm0.1$}          & 62.9 \\

		VREx \cite{krueger2021out}           & 
		84.9\scriptsize{$\pm0.6$} 
		           & 
		78.3\scriptsize{$\pm0.2$} 
		           & 
		66.4\scriptsize{$\pm0.6$} 
		           & 
		46.4\scriptsize{$\pm0.6$} 
		              &
		33.6\scriptsize{$\pm2.9$}          & 61.9   \\

		Mixup\cite{xu2020adversarial}             & 
		84.6\scriptsize{$\pm0.6$}            
		 & 
		77.4\scriptsize{$\pm0.6$}          
		& 
		68.1\scriptsize{$\pm0.3$}            
		& 
		47.9\scriptsize{$\pm0.8$}            
		     &
		39.2\scriptsize{$\pm0.1$}            & 63.4    \\

		SagNet \cite{nam2021reducing}           &             
		86.3\scriptsize{$\pm0.2$} 
		 & 
		77.8\scriptsize{$\pm0.5$}          
		& 
		68.1\scriptsize{$\pm0.1$}          
		& 
		48.6\scriptsize{$\pm1.0$}          
		        &
		40.3\scriptsize{$\pm0.1$}          &  64.2 \\

		CORAL \cite{sun2016deep}             & 
		86.2\scriptsize{$\pm0.3$}          
		& 78.8\scriptsize{$\pm0.6$}          
		 & 
		68.7\scriptsize{$\pm0.3$}          
		 & 
		47.6\scriptsize{$\pm1.0$}          
		     &
		41.5\scriptsize{$\pm0.1$}          & 64.5   \\
		
		
		RSC \cite{huang2020self}               & 
		85.2\scriptsize{$\pm0.9$}          
		& 77.1\scriptsize{$\pm0.5$}          
		& 
		65.5\scriptsize{$\pm0.9$}         & 
		46.6\scriptsize{$\pm1.0$}          
		   &
		38.9\scriptsize{$\pm0.5$}          & 62.7 \\
        
           ERM \cite{shahtalebi2021sand} & 
		85.5\scriptsize{$\pm0.9$}  
		& 
		77.5\scriptsize{$\pm0.9$}  
		& 
		66.5\scriptsize{$\pm0.4$}  
		 & 
		46.1\scriptsize{$\pm0.3$} 
		
		     &
		40.9\scriptsize{$\pm0.6$} & 63.3 \\ 
		
		SAM \cite{foretsharpness}  & 
		85.8\scriptsize$\pm0.2$             
		 & 
		79.4\scriptsize$\pm0.1$              
		& 
		69.6\scriptsize$\pm0.1$              
		 & 
		43.3\scriptsize$\pm0.7$              
		   &
		44.3\scriptsize$\pm0.0$              &  64.5 \\

        Fish \cite{shigradient}                     & 
		85.5\scriptsize{$\pm0.3$}          
		 & 
		77.8\scriptsize{$\pm0.3$}          
		 & 
		68.6\scriptsize{$\pm0.4$}            
		 & 
		45.1\scriptsize{$\pm1.3$}            
		    &
		42.7\scriptsize{$\pm0.2$}            &   63.9    \\
		GSAM \cite{zhuangsurrogate}  & 
		85.9\scriptsize$\pm0.1$           
		 & 
		79.1\scriptsize$\pm0.2$             
		 & 
		69.3\scriptsize$\pm0.0$             
		& 
		47.0\scriptsize$\pm0.8$             
		   &
		44.6\scriptsize$\pm0.2$             &   65.1 \\
		
		SAGM \cite{wang2023sharpness}       & 
		\underline{86.6}\scriptsize{$\pm0.2$}           
		& 
		\underline{80.0}\scriptsize{$\pm0.3$}           
		& 
		\underline{70.1}\scriptsize{$\pm0.2$}           
		& 
		48.8\scriptsize{$\pm0.9$}           
		  &
		45.0\scriptsize{$\pm0.2$}           &  66.1 \\

        GMDG \cite{tan2024rethinking}       & 
		\underline{85.6}\scriptsize{$\pm0.3$}           
		& 
		79.2\scriptsize{$\pm0.3$}           
		& 
		\textbf{70.7}\scriptsize{$\pm0.2$}           
		& 
		\textbf{51.1}\scriptsize{$\pm0.9$}           
		  &
		\underline{44.6}\scriptsize{$\pm0.2$}           &  66.3 \\
        
        GGA \cite{ballas2025gradient}       & 
		\underline{87.3}\scriptsize{$\pm0.2$}           
		& 
		79.9\scriptsize{$\pm0.3$}           
		& 
		68.5\scriptsize{$\pm0.2$}           
		& 
		50.6\scriptsize{$\pm0.9$}           
		  &
		\underline{45.2}\scriptsize{$\pm0.2$}           &  66.3 \\
		
		\midrule
		\textbf{DSP-Reg}                & 
		\textbf{87.5}\scriptsize{$\pm0.3$}           & 
		\textbf{80.1}\scriptsize{$\pm0.4$}           & 
		69.4\scriptsize{$\pm0.3$}           &  
		\underline{50.7}\scriptsize{$\pm0.2$}    &
		\textbf{45.6}\scriptsize{$\pm0.2$} & \textbf{66.7}\\

		\bottomrule
		
	\end{tabular}
    \label{table:total-results}
    \end{center}
	\vspace{-2em}
\end{table*}

\subsection{Theoretical Justification}

\textbf{Generalization Bound.} Our approach can be theoretically motivated through the lens of domain adaptation theory \cite{10.1007/s10994-009-5152-4}. Let $\mathcal{H}$ denote the hypothesis class and $d_{\mathcal{H}}(\mathcal{D}_s, \mathcal{D}_t)$ be the $\mathcal{H}$-divergence between source and target domains, and $\epsilon_s(\mathbf{\theta})$ be the source domain error. The target domain error $\epsilon_t(\mathbf{\theta})$ can be bounded as:
\begin{equation}
\epsilon_t(\mathbf{\theta}) \leq \epsilon_s(\mathbf{\theta}) + d_{\mathcal{H}}(\mathcal{D}_s, \mathcal{D}_t) + \lambda^*,
\end{equation}
where $\lambda^*$ is the optimal joint error. By penalizing parameters whose sensitivities vary across source domains, our method implicitly constrains the hypothesis space to functions that behave consistently across domains, thereby reducing the effective $\mathcal{H}$-divergence term and tightening the generalization bound.

\textbf{Connection to Invariant Risk Minimization.} Our regularizer shares conceptual similarities with Invariant Risk Minimization (IRM) but operates at the parameter level rather than the representation level. While IRM seeks representations that are optimal across all domains, our method identifies which parameters contribute to such representations, providing finer-grained control over the invariance-adaptation trade-off.

\textbf{Connection to Fisher Information.} From the relationship $s_k = \operatorname{Var}(\theta_k) \mathcal{I}_{kk}$ established in Section~\ref{sec:sensitivity-cov}, our regularizer can be interpreted as a domain-consistency-weighted Fisher information penalty. Parameters with consistent Fisher information across domains receive lower penalties, aligning with the principle of preserving statistically important yet domain-agnostic features.

\section{Experiments}
\subsection{Datasets}
To evaluate the effectiveness of our proposed Domain-Sensitive Parameter Regularization (DSP-Reg) framework, we conduct extensive experiments on five widely-adopted domain generalization benchmarks, which include PACS \cite{li2017deeper}, a standard benchmark consisting of 9,991 images across 7 classes from 4 distinct domains (Photo, Art Painting, Cartoon, and Sketch); VLCS \cite{fang2013unbiased}, which comprises 10,729 images across 5 classes from 4 different datasets (PASCAL VOC2007, LabelMe, Caltech-101, and SUN09); the challenging OfficeHome dataset \cite{venkateswara2017deep} with 15,588 images spanning 65 classes and 4 domains (Artistic, Clipart, Product, and Real-World); TerraIncognita \cite{beery2018recognition}, a real-world dataset containing 24,788 images of wild animals across 10 classes from 4 different locations; and finally, the large-scale DomainNet benchmark \cite{peng2019moment}, with approximately 586,000 images across 345 classes and 6 domains. Together, these datasets cover a diverse range of domain shifts, including style, context, and viewpoint variations, providing a comprehensive testbed for our method.

\subsection{Evaluation Protocol}
For fair and robust comparison, our entire experimental procedure adheres strictly to the protocol established by DomainBed \cite{gulrajani2021in}. We employ the standard \emph{leave-one-domain-out} (LODO) cross-validation strategy for all datasets. In each run, one domain is held out as the unseen target domain for testing, while the remaining domains are used for training. The model selection and hyperparameter tuning are performed on a validation set created from the training domains. The final performance is reported as the average top-1 accuracy on the target domain, and all experiments are repeated over three random seeds to ensure statistical significance. 

\begin{table}[t]
    \caption{\textbf{Ablation study on PACS dataset.} We systematically evaluate each component of DSP-Reg. Results show leave-one-domain-out accuracy for each target domain.}
    \vspace{-3mm}
    \centering
    \begin{tabular}{l|cccc|c}
    \toprule
    Method & Photo & Art & Cartoon & Sketch & Average \\
    \midrule
    Full Method & \textbf{98.9} & \textbf{86.9} & \textbf{82.8} & \textbf{81.4} & \textbf{87.5} \\
    w/o $\mathcal{R}_{\mathrm{DI}}$ ($\lambda = 0$) & 97.2 & 84.7 & 80.8 & 79.3 & 85.5 \\
    w/o $c_k$ ($c_k = 1$) & 97.5 & 84.9 & 81.2 & 79.6 & 85.8 \\
    \bottomrule
    \end{tabular}
    \label{table:ablation-main}
    \vspace{-5mm}
    \end{table}

\subsection{Implementation Details.} Our implementation follows standard practices for all experiments. We use the ResNet-50 \cite{he2016deep} pre-trained on ImageNet \cite{russakovsky2015imagenet}, with its batch normalization layers frozen during fine-tuning, and the Adam optimizer for model training. General hyperparameters such as learning rate, weight decay, and dropout rates are selected via random search from the distributions specified in the DomainBed framework. We set the batch size to 32. Our proposed method introduces two primary hyperparameters: the regularization strength $\lambda$ and the sensitivity coefficient update frequency $T_{\mathrm{update}}$, which are tuned for each LODO split using the validation set. Specifically, $\lambda$ is 0.001 and $T_{\mathrm{update}}$ is set to 2. We conduct all experiments on four NVIDIA A100 GPUs.

\subsection{Performance Comparison}
To validate the effectiveness of our proposed DSP-Reg, we
report the average OOD performances on a total of five DG benchmarks in Table \ref{table:total-results}, The results for each domain are reported in the Appendix. Our method achieves superior performance across all datasets, validating its effectiveness in capturing and utilizing domain-invariant parameter sensitivities. Specifically, DSP-Reg achieves an impressive average accuracy of 66.7\%, surpassing state-of-the-art methods such as GMDG (66.3\%), GGA (66.3\%), and SAGM (66.1\%). On individual benchmarks, our method notably achieves the highest accuracy on PACS (87.5\%), VLCS (80.1\%) and DomainNet (45.6\%). 
Additionally, our method demonstrates competitive performance on TerraIncognita (50.7\%), slightly surpassing previous approaches. On OfficeHome, DSP-Reg attains an accuracy of 69.4\%, closely trailing behind GMDG yet significantly outperforming most other approaches. 
These results collectively illustrate the robustness and versatility of DSP-Reg in handling diverse domain shifts. Compared with the strong Empirical Risk Minimization (ERM) baseline, DSP-Reg significantly improves average performance by 3.4\%, highlighting the benefits of parameter-level regularization guided by covariance-based sensitivity analysis.

\subsection{Ablation Studies and Component Analysis}

To evaluate the effectiveness of each component in our proposed Domain-Sensitive Parameter Regularization (DSP-Reg) framework, we conduct comprehensive ablation studies on the PACS dataset. These experiments systematically examine the contribution of each key component while maintaining all other experimental settings constant.

\begin{figure}[t]
    \centering
	
    \includegraphics[width=.6\linewidth]{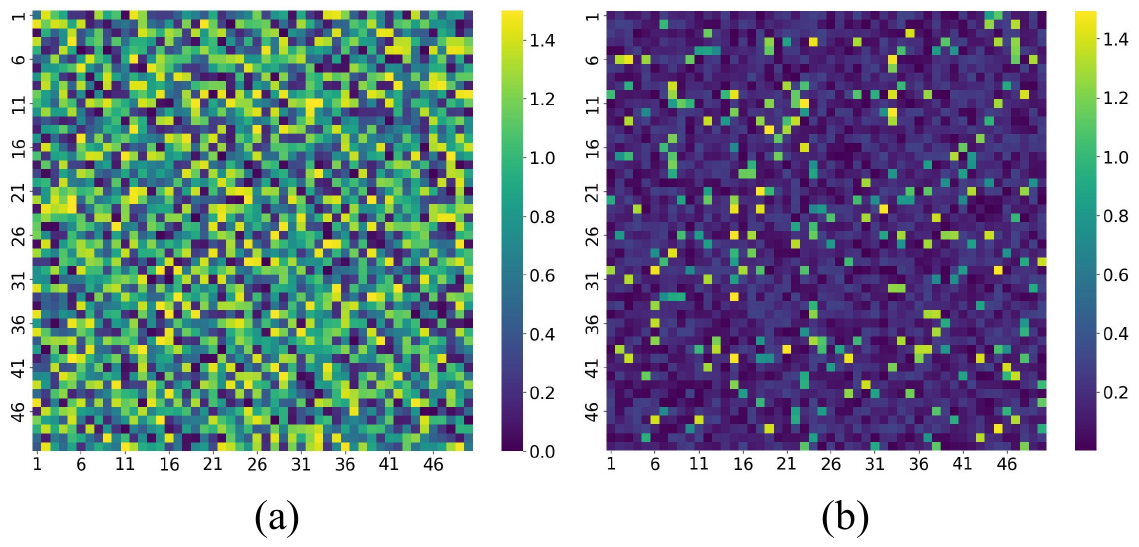}
	\vspace{-3mm}
    \caption{Visualization of parameter sensitivities $c_k$ for 2500 randomly sampled parameters from the 13th layer of ResNet-50. (a) Initial $c_k$ values computed at the beginning of training; (b) $c_k$ values after completion of training. Brighter colors indicate higher domain sensitivity.}
    \label{fig:map}
    \end{figure}

\textbf{Effect of Domain-Invariance Regularization.} We first investigate the core contribution of our domain-invariance regularization term $\mathcal{R}_{\mathrm{DI}}(\mathbf{\theta})$ by comparing our full method against a baseline without regularization. This is achieved by setting the regularization strength $\lambda = 0$ in Eq.~\eqref{eq:regularized-loss}, effectively reducing our method to standard ERM. As shown in Table~\ref{table:ablation-main}, the addition of $\mathcal{R}_{\mathrm{DI}}(\mathbf{\theta})$ provides substantial improvements across all four domains of PACS, with an average accuracy gain of 2\%. This demonstrates the fundamental importance of encouraging domain-invariant parameter behavior through our sensitivity-based regularization. Figure \ref{fig:map} provides an intuitive visualization of how domain sensitivities $c_k$ evolve during training for parameters sampled from the 13th layer of ResNet-50. Initially, as shown in Figure \ref{fig:map}(a), parameters exhibit relatively high and diverse $c_k$ values, reflecting significant variation in sensitivity across different domains. After training with our proposed domain-invariant regularization (Figure \ref{fig:map}(b)), most parameters'$c_k$ values substantially decrease, indicating enhanced domain invariance. This demonstrates that our method effectively identifies and suppresses domain-sensitive parameters, thereby contributing to improved robustness and generalization across unseen domains.

\textbf{Necessity of Cross-Domain Sensitivity Coefficients.} To evaluate the importance of our cross-domain consistency weighting mechanism, we compare our method against a variant where all sensitivity coefficients are set to unity ($c_k = 1$ for all parameters $k$). This ablation, denoted as ``w/o $c_k$" in Table~\ref{table:ablation-main}, reduces our regularizer from Eq.~\eqref{eq:di-regularizer} to a standard gradient penalty. The results reveal that uniform weighting significantly underperforms our adaptive weighting scheme, with a 1.7\% drop in average accuracy. This confirms that the coefficient of variation $c_k$ from Eq.~\eqref{eq:cv-sensitivity} effectively identifies which parameters require stronger regularization based on their cross-domain sensitivity patterns.

\textbf{Sensitivity Coefficient Update Interval.} Our method periodically updates the sensitivity coefficients $\{c_k\}$ every $T_{\mathrm{update}}$ epoch as described in Algorithm~\ref{alg:di-training}. We investigate the impact of different update frequencies by varying $T_{\mathrm{update}} \in \{1, 2, 3, 4\}$ epochs. Figure~\ref{fig:accuracy} illustrates that updating every 2 epochs achieves the optimal balance between computational efficiency and adaptation to changing parameter importance. More frequent updates ($T_{\mathrm{update}} = 1$) introduce unnecessary computational overhead without significant performance gains, while less frequent updates ($T_{\mathrm{update}} \geq 3$) fail to capture the evolving sensitivity patterns during training, leading to suboptimal regularization.

\textbf{Dynamic vs. Static Sensitivity Coefficients.} To assess the value of dynamically updating sensitivity coefficients throughout training, we compare our full method against a variant that computes $c_k$ only once at the beginning and maintains these values for the remainder of training. This "Static $c_k$" variant, shown in Table~\ref{table:ablation-dynamic}, demonstrates that dynamic updating provides a consistent 1.8\% improvement in average accuracy. This result underscores the importance of adapting the regularization weights as the model learns, since parameter sensitivity patterns evolve during the optimization process.

\begin{table}[t]
    \caption{\textbf{Dynamic vs. static sensitivity coefficient updates.} Comparison between our dynamic updating scheme and computing coefficients only once during training.}
    \vspace{-3mm}
    \centering
\begin{tabular}{l|cccc|c}
\toprule
Method & Photo & Art & Cartoon & Sketch & Average \\
\midrule
Dynamic $c_k$ & \textbf{98.9} & \textbf{86.9} & \textbf{82.8} & \textbf{81.4} & \textbf{87.5} \\
Static $c_k$ & 97.5 & 84.7 & 81.0 & 79.6 & 85.7 \\
\bottomrule
\end{tabular}
\label{table:ablation-dynamic}
\end{table}

\textbf{Impact of $\lambda$ on Performance.} We conduct a sensitivity analysis of the regularization strength $\lambda$ in Eq.~\eqref{eq:regularized-loss} by evaluating our method with $\lambda \in \{0.0001, 0.001, 0.01, 0.1\}$. Figure~\ref{fig:accuracy} shows that performance increases with $\lambda$ up to the optimal value of $\lambda = 0.001$, beyond which excessive regularization begins to harm the supervised learning objective. This demonstrates that our method benefits from moderate regularization strength that balances domain-invariance with task-specific learning.


\begin{figure}[t]
    \centering
    \begin{subfigure}[b]{0.3\linewidth}
    \includegraphics[width=\textwidth]{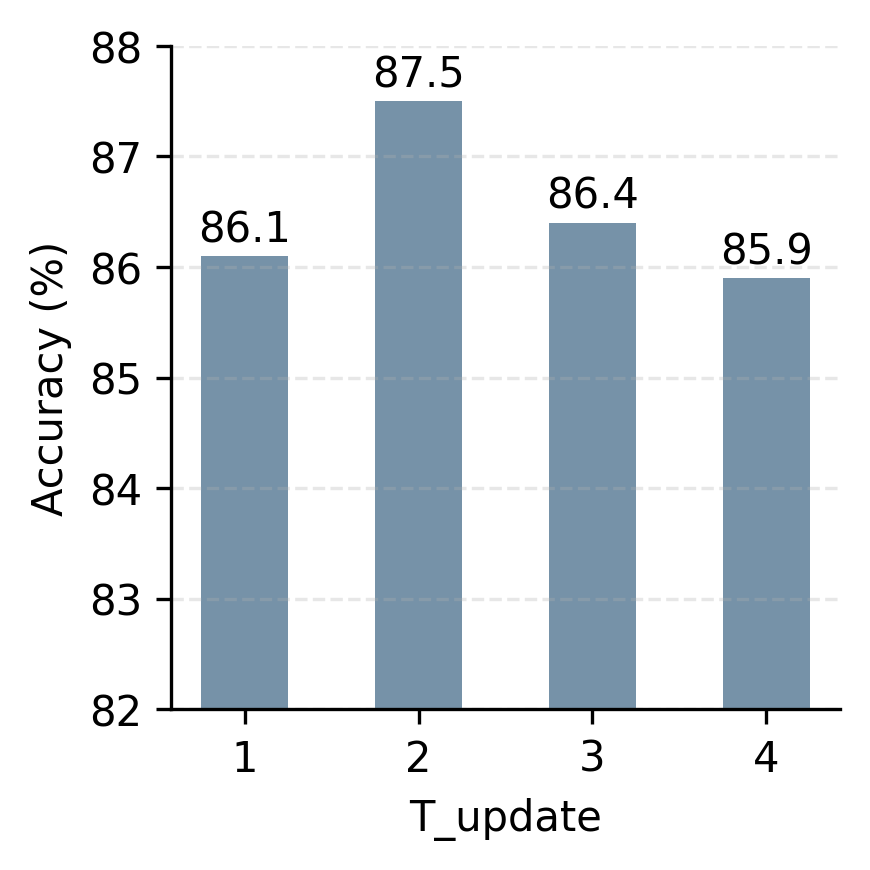}
    \caption{}
    \end{subfigure}
    \begin{subfigure}[b]{0.3\linewidth}
    \includegraphics[width=\textwidth]{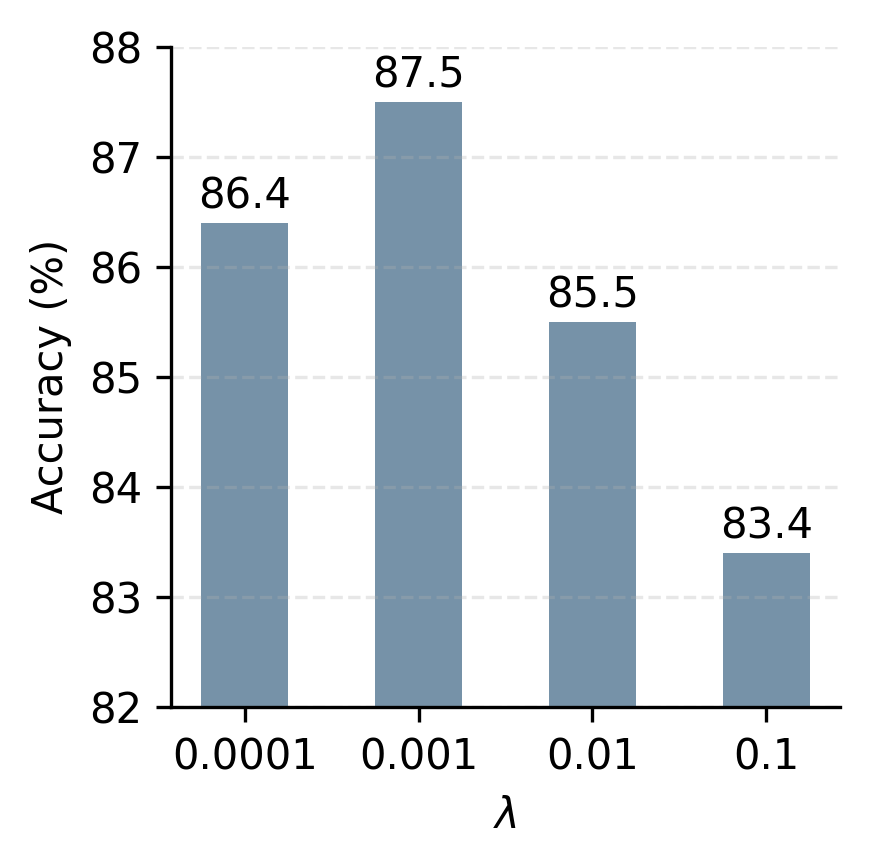}
    \caption{}
    \end{subfigure}
    \vspace{-3mm}
    \caption{\textbf{(a) Impact of the update frequency of sensitivity coefficient.} Average accuracy on PACS dataset for different values of $T_{\mathrm{update}}$. Updating every 2 epochs provides the optimal trade-off between performance and computational cost. \textbf{(b) Regularization strength analysis.} Average accuracy on PACS dataset as a function of the regularization parameter $\lambda$. The optimal value $\lambda = 0.001$ balances domain-invariance with supervised learning.}
    \label{fig:accuracy}
\end{figure}

\textbf{Computational Overhead Analysis}
To evaluate the practical scalability of DSP-Reg, we performed a computational overhead analysis focusing on training speed (measured in iterations per second) and peak GPU memory usage as shown in Table \ref{table:computational-overhead}. All experiments are under the same training configuration as the main experiments. Compared with the standard ERM baseline, DSP-Reg introduces only minor additional cost. The slight reduction in iterations per second primarily results from the cross-domain sensitivity statistics computed during periodic updates, while the computation of the regularisation term itself is negligible since it reuses the gradients from the supervised loss. DSP-Reg exhibits an approximately 3.4\% decrease in training speed and around 1.4 GB increase in peak GPU memory relative to ERM. These results confirm that DSP-Reg maintains practical scalability with minimal computational overhead.

\begin{table}[t]
    \caption{\textbf{Computational overhead analysis.}}
    \vspace{-3mm}
    \centering
    \begin{tabular}{l|cc}
    \toprule
    Method & Iterations / second & Peak GPU memory (GB) \\
    \midrule
    ERM (baseline) & \textbf{2.9} & 22.9  \\
    DSP-Reg ($T_{\mathrm{update}}$ = 2) & 2.8 & \textbf{24.3}  \\
    \bottomrule
    \end{tabular}
    \label{table:computational-overhead}
    \end{table}

\section{Conclusion}
In this paper, we have introduced Domain-Sensitive Parameter  Regularization (DSP-Reg) framework, a principled approach aimed at enhancing domain generalization by explicitly quantifying and regularizing parameter-level domain sensitivity. By employing a covariance-based sensitivity analysis framework, we define a robust per-parameter metric, the cross-domain coefficient of variation, to effectively identify and penalize domain-sensitive parameters. Extensive experiments across five challenging DG benchmarks demonstrate the superiority of DSP-Reg and achieve state-of-the-art results. The consistent improvements underscore the importance of fine-grained, parameter-level regularization in achieving robust model generalization.




{
    \small
    \bibliographystyle{ieeenat_fullname}
    \bibliography{ref,ref2}
}

\clearpage
\newpage
\onecolumn

\appendix

\makeatletter
\def\cvprsect#1{\cvprsection{\texorpdfstring{\hskip -0.8em.~}{}#1}}
\makeatother


\section{Discussion of Positioning and Uniqueness of DSP-Reg}

Existing parameter-level domain generalization methods, such as Fishr \citep{rame2022fishr}, FisherTune \citep{zhao2025fishertune}, or gradient-alignment regularizers \cite{huangSelfChallengingImprovesCrossDomain2020, 10203522,10376731,ballas2025gradient}, rely on statistical consistency of gradients or Fisher information across domains. These approaches typically enforce equality of global statistics, for example, matching gradient variances or Fisher diagonals, which leads to treating all parameters uniformly. In contrast, DSP-Reg introduces a fine-grained, covariance-based view that explicitly decomposes parameter uncertainty into per-parameter contributions and quantifies their cross-domain variability.

Specifically, DSP-Reg defines a per-parameter sensitivity index $s_k$ derived from second-order moment propagation, then measures the coefficient of variation $c_k$ of $s_k$ across domains. This formulation provides a direct, interpretable estimate of how each parameter’s Fisher curvature fluctuates under domain shift. By penalizing parameters with large $c_k$, DSP-Reg achieves domain-consistency weighting at the parameter level, something that global Fisher regularizers or representation-level invariance methods cannot provide.

Furthermore, DSP-Reg is \emph{orthogonal and complementary to existing DG paradigms}: it can be seamlessly integrated with data-augmentation, invariant-representation, or meta-learning schemes as an additional regularization term, introducing minimal computational overhead while offering a new parameter-sensitivity perspective on domain generalization.

\section{Theoretical Rationale: Cross-Domain Sensitivity Disparity Indicates Domain-Specificity}

This section provides a theoretical justification for the claim that \emph{if a parameter exhibits significant sensitivity differences across source domains, it is likely encoding domain-specific rather than domain-invariant regularities}.
We proceed from first principles without invoking additional modeling assumptions except for local differentiability and finite second moments.
Specifically, we 1) linearise the neural network to relate infinitesimal input or parameter perturbations to output changes, 2) propagate second-order moments to isolate the \emph{parameter-induced} component of output variability, and 3) define a per-parameter, per-domain sensitivity $s_k^{(d)}$ and its cross-domain dispersion $c_k$.
We also discuss the underlying assumptions (locality, centred moments, weak cross-terms) and provide simple empirical checks.

\vspace{.5em}
\subsection{Notation and Goal}
Let $f_\theta:\mathbb{R}^{d_x}\!\to\!\mathbb{R}^{d_y}$ be differentiable.
Domains are $\{D_d\}_{d=1}^D$.
For a parameter index $k$, denote the Jacobian column $\mathbf{j}_k(x)\!:=\!\partial_{\theta_k} f_\theta(x)\in\mathbb{R}^{d_y}$.
For domain $D_d$, we can define the sensitivity index $s_k^{(d)}$ as:
\begin{equation}
s_k^{(d)} := \Var(\theta_k)\;\E_{x\sim D_d}\!\left[\|\mathbf{j}_k(x)\|_2^2\right],
\qquad
\bar s_k := \frac{1}{D} \sum_{d=1}^D s_k^{(d)},
\qquad
v_k := \frac{1}{D} \sum_{d=1}^D \big(s_k^{(d)}-\bar s_k\big)^2,
\end{equation}
and the coefficient of variation $c_k := \sqrt{v_k}/(\bar s_k+\varepsilon)$ with tiny $\varepsilon>0$.

\paragraph{Claim:} If $c_k$ is large (equivalently, $\{s_k^{(d)}\}$ differ significantly across different domains $D_d$), then parameter $\theta_k$ is more likely encoding domain-specific rather than domain-invariant features.

\paragraph{Idea.}
$s_k^{(d)}$ measures how strongly infinitesimal changes in $\theta_k$ impact the output on domain $D_d$.
If $\theta_k$ relied only on domain-invariant statistics, that shaking should be (approximately) the same for all domains $D_d$.
Therefore, large cross-domain dispersion of $s_k^{(d)}$ contradicts domain-invariance.

\vspace{1em}

\subsection{From Perturbations to Sensitivity}

\paragraph{Step 1: Local linearisation (by differentiability).}
At $(x_0,\theta_0)$, for small perturbations $(\delta_x,\delta_\theta)$,
\begin{equation}
\delta_y \;\approx\; J_x\,\delta_x \;+\; J_\theta\,\delta_\theta,
\qquad
J_x \!=\! \partial_x f_\theta(x_0),\ \ 
J_\theta \!=\! \partial_\theta f_\theta(x_0).
\label{eq:lin}
\end{equation}
\textit{Intuition.} This means that how much the output moves if we wiggle input or parameters just a little.

\paragraph{Step 2: Second-order moment propagation (by centeredness).}
Assume centered perturbations with finite second moments:
$\E[\delta_x]=0$, $\E[\delta_\theta]=0$.
Let $\Sigma_x\!=\!\Cov(\delta_x)$, $\Sigma_\theta\!=\!\Cov(\delta_\theta)$, and initially ignore cross-covariance (we relax later).
Taking covariance on both sides of Eq. \eqref{eq:lin} gives
\begin{align}
\Cov(\delta_y) 
&\approx \Cov(J_x\delta_x) + \Cov(J_\theta\delta_\theta) 
\quad (\text{drop }\Cov(J_x\delta_x, J_\theta\delta_\theta) \text{ for now}) \nonumber\\
&= J_x \Cov(\delta_x) J_x^\top + J_\theta \Cov(\delta_\theta) J_\theta^\top \nonumber\\
&= J_x \Sigma_x J_x^\top \;+\; J_\theta \Sigma_\theta J_\theta^\top .
\label{eq:cov-split}
\end{align}
\textit{Intuition.} Output variance splits into an input-induced term and a parameter-induced term.

\paragraph{Step 3: Isolating the $k$-th parameter contribution (by diagonality).}
Parameter perturbances along different coordinates are often modeled independent locally, so let
\begin{equation}
\Sigma_\theta=\diag\!\big(\Var(\theta_1),\ldots,\Var(\theta_{d_\theta})\big), 
\qquad J_\theta = [\, \mathbf{j}_1(x_0)\ \cdots\ \mathbf{j}_{d_\theta}(x_0) \,].
\end{equation}
Then
\begin{align}
J_\theta \Sigma_\theta J_\theta^\top 
&= \sum_{k=1}^{d_\theta} \Var(\theta_k)\, \mathbf{j}_k(x_0)\, \mathbf{j}_k(x_0)^\top.
\label{eq:param-cov-sum}
\end{align}
Taking expectation over $x\sim D_d$ and allowing $x_0$ to vary with samples, the \emph{expected parameter-induced variance} aggregates as
\begin{equation}
\underbrace{\E_{x\sim D_d}\!\big[J_\theta(x)\Sigma_\theta J_\theta(x)^\top\big]}_{\text{parameter-induced output variance on }D_d}
= \sum_{k=1}^{d_\theta} \Var(\theta_k)\ \E_{x\sim D_d}\!\big[\mathbf{j}_k(x)\mathbf{j}_k(x)^\top\big].
\label{eq:domain-expect}
\end{equation}
Projecting onto the $k$-th coordinate (i.e., attributing the variance share to $\theta_k$) gives the scalar
\begin{equation}
s_k^{(d)} 
\;\triangleq\; \Var(\theta_k)\; \E_{x\sim D_d}\!\big[\|\mathbf{j}_k(x)\|_2^2\big].
\label{eq:s-def}
\end{equation}

\paragraph{Sanity in scalar-output case.}
If $d_y=1$ (scalar output), $\mathbf{j}_k(x)\in\R$, and Eq. \eqref{eq:s-def} reduces to
\(
s_k^{(d)}=\Var(\theta_k)\ \E_{x\sim D_d}\!\big[(\partial_{\theta_k} f_\theta(x))^2\big]
\). Then, we can interpret $s_k^{(d)}$ as \emph{``how much output variance $\theta_k$ creates on domain $D_d$ per infinitesimal perturbation.''}

\subsection{What \texorpdfstring{$s_k^{(d)}$}{skd} reveals across domains}

Define dispersion across domains:
\begin{equation}
\bar s_k=\frac{1}{D} \sum_{d=1}^D s_k^{(d)}, \qquad
v_k=\frac{1}{D} \sum_{d=1}^D (s_k^{(d)}-\bar s_k)^2, \qquad
c_k=\frac{\sqrt{v_k}}{\bar s_k+\varepsilon}.
\end{equation}
\textbf{Observation.} If the mapping $x\mapsto \mathbf{j}_k(x)$ depends only on domain-invariant statistics, then the second moment $\E_{x\sim D_d}\|\mathbf{j}_k(x)\|_2^2$ does not change with $d$, hence $s_k^{(d)}$ is constant and $c_k\approx 0$. Conversely, if this second moment varies with $d$, then $v_k>0$ and $c_k$ is large.

\vspace{1em}

\subsection{Formal Statements Without Leaps}

\begin{definition}[Domain-invariant dependence]\label{def:inv}
Parameter $\theta_k$ depends only on domain-invariant statistics if there exists a statistic $T:\R^{d_x}\to\R^m$ and a measurable $g_k$ such that
$\mathbf{j}_k(x)=g_k(T(x))$ and the second moment of $T(x)$ is domain-agnostic:
$\E_{x\sim D_d}\big[\|T(x)\|_2^2\big]$ is identical for all $d$.
\end{definition}

\begin{lemma}[Invariant dependence $\Rightarrow$ constant sensitivity]\label{lem:const}
Under Def.~\ref{def:inv}, $\E_{x\sim D_d}\|\mathbf{j}_k(x)\|_2^2$ is identical in $d$, hence $s_k^{(d)}$ is constant and $c_k=0$.
\end{lemma}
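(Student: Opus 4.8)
The plan is to show that each of the three quantities $s_k^{(d)}$, $v_k$, and $c_k$ collapses to a domain-independent value by tracking the single expectation $\E_{x\sim D_d}\|\mathbf{j}_k(x)\|_2^2$ through the factorization supplied by Definition~\ref{def:inv}. First I would substitute $\mathbf{j}_k(x)=g_k(T(x))$ to write $\E_{x\sim D_d}\|\mathbf{j}_k(x)\|_2^2 = \E_{x\sim D_d}\big[h_k(T(x))\big]$, where $h_k(\cdot):=\|g_k(\cdot)\|_2^2$ is a fixed, domain-independent scalar function of the statistic $T$. The goal then reduces to arguing that this expectation does not depend on the domain index $d$.

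The key step is a change-of-variables argument: since $h_k(T(x))$ depends on $x$ only through $T(x)$, its expectation equals the integral of $h_k$ against the pushforward law $T_\# D_d$ of $T$ under domain $D_d$. If that pushforward law is domain-invariant, then $\E_{x\sim D_d}[h_k(T(x))] = \int h_k(t)\, d(T_\# D_d)(t)$ is manifestly independent of $d$, and multiplying by the domain-independent scalar $\Var(\theta_k)$ yields a constant value $s_k^{(d)} \equiv s_k^\ast$ for all $d$.

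Once $s_k^{(d)}$ is constant in $d$, the remaining steps are immediate algebra: the mean $\bar s_k = \frac1D\sum_{d}s_k^\ast = s_k^\ast$ equals that common value, every deviation $s_k^{(d)}-\bar s_k$ vanishes, so $v_k = \frac1D\sum_d 0 = 0$, and hence $c_k = \sqrt{v_k}/(\bar s_k+\varepsilon) = 0$. No cancellation or inequality is needed here; the conclusion is purely definitional once constancy is in hand.

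The main obstacle is the gap between what Definition~\ref{def:inv} literally assumes and what the change-of-variables step requires. The definition controls only the second moment $\E_{x\sim D_d}\|T(x)\|_2^2$, but for a nonlinear $g_k$ the quantity $\E[h_k(T(x))]$ depends on the full law of $T(x)$, not just its second moment, so equality of second moments alone does not force equality of $\E[h_k(T)]$ across domains. I would close this gap by reading \emph{domain-invariant dependence} in its natural stronger form, requiring the pushforward distribution $T_\#D_d$ itself to be domain-agnostic, which makes the change of variables exact; alternatively, one can restrict $g_k$ to maps for which $\|g_k(\cdot)\|_2^2$ is affine in $\|\cdot\|_2^2$, so that the stated second-moment condition already suffices. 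Either strengthening turns the heuristic into a rigorous implication without disturbing the downstream algebra.
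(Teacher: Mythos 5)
Your proof follows the same route as the paper's: substitute $\mathbf{j}_k(x)=g_k(T(x))$, argue that $\E_{x\sim D_d}\|g_k(T(x))\|_2^2$ does not depend on $d$, and then note that constancy of $s_k^{(d)}$ forces $v_k=0$ and $c_k=0$ by pure algebra. The one place you go beyond the paper is the gap you flag, and you are right to flag it: Definition~\ref{def:inv} as literally written only fixes the scalar second moment $\E_{x\sim D_d}\|T(x)\|_2^2$ across domains, whereas for a general nonlinear $g_k$ the quantity $\E_{x\sim D_d}\|g_k(T(x))\|_2^2$ depends on the full pushforward law $T_{\#}D_d$, not on that single scalar. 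The paper's own one-line proof elides this by asserting that $\E\|g_k(T(x))\|_2^2$ is ``domain-agnostic by assumption,'' which only holds if one reads the definition in the stronger sense you propose (the law of $T(x)$ itself is domain-independent) or restricts $g_k$ so that $\|g_k(\cdot)\|_2^2$ is determined by $\|\cdot\|_2^2$. Either repair is sound, leaves the downstream algebra and Proposition~\ref{prop:main} untouched, and the distributional-invariance reading is the more natural one; so your argument is correct and in fact tightens the hypothesis the paper's proof implicitly relies on.
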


\begin{proof}
$\mathbf{j}_k(x)=g_k(T(x))$ $\Rightarrow$ $\E_{x\sim D_d}\|\mathbf{j}_k(x)\|_2^2=\E\|g_k(T(x))\|_2^2$, which is domain-agnostic by assumption. Then Eq.~\eqref{eq:s-def} is independent of $d$, so $v_k=0$ and $c_k=0$.
\end{proof}

\begin{lemma}[Sensitivity disparity $\Leftrightarrow$ Jacobian-energy disparity]\label{lem:iff}
For any two domains $d_1\neq d_2$,
\[
s_k^{(d_1)} \neq s_k^{(d_2)}
\quad\Longleftrightarrow\quad
\E_{x\sim D_{d_1}}\|\mathbf{j}_k(x)\|_2^2 \neq \E_{x\sim D_{d_2}}\|\mathbf{j}_k(x)\|_2^2 .
\]
\end{lemma}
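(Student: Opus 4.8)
The plan is to exploit the structural fact that the per-domain sensitivity $s_k^{(d)}$ in Eq.~\eqref{eq:s-def} factors into a domain-\emph{independent} scalar and a domain-\emph{dependent} Jacobian energy. Concretely, writing $E_d := \E_{x\sim D_d}\|\mathbf{j}_k(x)\|_2^2$, the definition reads $s_k^{(d)} = \Var(\theta_k)\,E_d$, where the prefactor $\Var(\theta_k)$ carries no dependence on the domain index $d$. Since the entire content of the lemma is an equivalence between the distinctness of two scalars and the distinctness of their rescaled versions, the argument reduces to observing that multiplication by a fixed positive constant is a bijection of $\R$ and hence both preserves and reflects equality.

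First I would record the factorisation $s_k^{(d_i)} = \Var(\theta_k)\,E_{d_i}$ for $i=1,2$ directly from Eq.~\eqref{eq:s-def}, then form the difference $s_k^{(d_1)} - s_k^{(d_2)} = \Var(\theta_k)\,(E_{d_1} - E_{d_2})$. Under the standing assumption $\Var(\theta_k) > 0$ (which holds in particular for the normalisation $\Var(\theta_k)=1$ adopted in Eq.~\eqref{eq:domain-sensitivity}), a product of reals vanishes iff one of its factors vanishes; with the positive factor nonzero this forces $s_k^{(d_1)} - s_k^{(d_2)} = 0$ to be equivalent to $E_{d_1} - E_{d_2} = 0$. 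Taking contrapositives on both sides then yields exactly the claimed equivalence $s_k^{(d_1)}\neq s_k^{(d_2)} \Longleftrightarrow E_{d_1}\neq E_{d_2}$.

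There is essentially no analytical obstacle here; the only point demanding care is the degenerate case $\Var(\theta_k)=0$. If a parameter carries zero prior uncertainty then $s_k^{(d)}\equiv 0$ across all domains, so the left-hand side of the equivalence is always false while the right-hand side may still hold, breaking the forward implication. The hard part, such as it is, is therefore not a computation but making the positivity hypothesis explicit: I would state at the outset that $\Var(\theta_k)>0$, justify it by noting that a parameter with vanishing variance contributes nothing to output variability (and is normalised to unit variance in the operative definition), after which the one-line bijection argument closes the proof.
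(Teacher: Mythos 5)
Your proof is correct and follows essentially the same route as the paper, which simply notes the claim is immediate from Eq.~\eqref{eq:s-def} because $\Var(\theta_k)$ is a fixed scalar. Your additional care in making the hypothesis $\Var(\theta_k)>0$ explicit is a genuine (if small) improvement, since the paper's one-line argument silently assumes it and the equivalence does fail in the degenerate case $\Var(\theta_k)=0$ (where both sensitivities collapse to zero and the implication from Jacobian-energy disparity back to sensitivity disparity breaks).
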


\begin{proof}
Immediate from \eqref{eq:s-def} since $\Var(\theta_k)$ is fixed (local scalar).
\end{proof}

\begin{proposition}[Contrapositive certificate of domain-specificity]\label{prop:main}
If $c_k>0$ (equivalently, the family $\{s_k^{(d)}\}_d$ is not constant), then $\theta_k$ cannot depend solely on domain-invariant statistics (Def.~\ref{def:inv}). In words, a significant cross-domain disparity of sensitivity implies that $\theta_k$ captures domain-specific variation.
\end{proposition}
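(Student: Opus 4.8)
The plan is to prove Proposition~\ref{prop:main} as the direct contrapositive of Lemma~\ref{lem:const}, so that no new machinery is required beyond what has already been established. First I would make precise the parenthetical equivalence in the statement, namely that $c_k > 0$ holds if and only if the family $\{s_k^{(d)}\}_{d=1}^D$ fails to be constant in $d$. Since each $s_k^{(d)} = \Var(\theta_k)\,\E_{x\sim D_d}[\|\mathbf{j}_k(x)\|_2^2]$ is a product of nonnegative quantities, the mean $\bar s_k$ is nonnegative, so the denominator $\bar s_k + \varepsilon$ is strictly positive. Consequently $c_k = \sqrt{v_k}/(\bar s_k+\varepsilon)$ vanishes exactly when $v_k = 0$, and because $v_k$ is the finite empirical variance of the $D$ numbers $s_k^{(1)},\dots,s_k^{(D)}$, we have $v_k = 0$ iff all of them coincide. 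This pins down the equivalence $c_k>0 \Leftrightarrow \{s_k^{(d)}\}_d\ \text{not constant}$.

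Then I would argue by contraposition. Suppose, toward the contrapositive, that $\theta_k$ does depend solely on domain-invariant statistics in the sense of Definition~\ref{def:inv}; that is, there exist a statistic $T$ and a measurable map $g_k$ with $\mathbf{j}_k(x) = g_k(T(x))$ and with $\E_{x\sim D_d}[\|T(x)\|_2^2]$ independent of $d$. Lemma~\ref{lem:const} then applies verbatim and yields $s_k^{(d)}$ constant in $d$, hence $v_k = 0$ and $c_k = 0$. By the equivalence just established this is logically equivalent to the negation of the hypothesis $c_k>0$. Taking the contrapositive of the implication ``Definition~\ref{def:inv} $\Rightarrow c_k=0$'' delivers exactly the claim: $c_k > 0$ forces the factorization of Definition~\ref{def:inv} to be impossible, so $\theta_k$ must depend on some domain-specific statistic. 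If desired, I would phrase the final step through Lemma~\ref{lem:iff} as well, since $c_k>0$ forces some pair $d_1\neq d_2$ with $s_k^{(d_1)}\neq s_k^{(d_2)}$, and hence $\E_{x\sim D_{d_1}}\|\mathbf{j}_k(x)\|_2^2 \neq \E_{x\sim D_{d_2}}\|\mathbf{j}_k(x)\|_2^2$, which directly contradicts the domain-agnostic second moment required by Definition~\ref{def:inv}.

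I do not expect any genuine mathematical obstacle, since the heavy lifting is done inside Lemma~\ref{lem:const}; the proof is essentially a logical rearrangement. The one point that deserves care — and which I would flag explicitly rather than sweep aside — is the \emph{direction} of the certificate. Definition~\ref{def:inv} is a sufficient condition for constant sensitivity but not, in general, a necessary one: distinct domain-dependent Jacobian laws can still produce equal second moments, so $c_k = 0$ does not by itself guarantee domain-invariant dependence. Accordingly, the honest reading of the proposition is as a \emph{necessary condition} (a one-sided certificate): a large $c_k$ rules out the invariant factorization, whereas a small $c_k$ is only suggestive. I would include a one-line remark to this effect so that the contrapositive is not over-interpreted as an equivalence.
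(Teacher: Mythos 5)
Your proposal is correct and follows essentially the same route as the paper: the paper's proof is precisely the contrapositive of Lemma~\ref{lem:const} (domain-invariant dependence forces $s_k^{(d)}$ constant, hence $c_k=0$). Your added verification that $c_k>0$ is equivalent to non-constancy of $\{s_k^{(d)}\}_d$, and your caveat that the certificate is one-sided ($c_k=0$ does not imply Definition~\ref{def:inv}), are sound refinements the paper leaves implicit.
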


\begin{proof}
By Lemma~\ref{lem:const}, domain-invariant dependence would force $s_k^{(d)}$ to be constant, i.e., $c_k=0$. The contrapositive yields the claim.
\end{proof}

\vspace{1em}

\subsection{Relaxing Assumptions}
\begin{enumerate}
\item \textbf{Local linearity.}
We used Eq.~\eqref{eq:lin} to pass from nonlinear $f_\theta$ to Jacobian columns $\mathbf{j}_k(x)$.
The claim is local, which is precisely the regime relevant to gradient-based training and local regularisation.

\item \textbf{Centered finite second moments.}
We needed $\E[\delta_x]\!=\!0$, $\E[\delta_\theta]\!=\!0$ and finite second moments to write Eq.~\eqref{eq:cov-split}.
Non-Gaussianity does not break the algebra; it only affects constants.

\item \textbf{Ignoring input--parameter cross-covariance (and relaxing it).}
If $\Cov(\delta_x,\delta_\theta)\neq 0$, then
\[
\Cov(\delta_y) \approx J_x\Sigma_x J_x^\top + J_\theta\Sigma_\theta J_\theta^\top + J_x\Sigma_{x\theta}J_\theta^\top + J_\theta\Sigma_{\theta x}J_x^\top.
\]
Projecting onto the $k$-th parameter direction still singles out the $\Var(\theta_k)\, \E\|\mathbf{j}_k(x)\|_2^2$ contribution.
Cross-terms may add a domain-agnostic bias if induced by data shuffling; empirically they are small or vanish in expectation.

\item \textbf{Diagonal $\Sigma_\theta$.}
If $\Sigma_\theta$ has small off-diagonals, we can bound the contamination:
\begin{equation}
\Big|\, s_k^{(d)} - \Var(\theta_k)\E\|\mathbf{j}_k(x)\|^2 \,\Big|
\le \sum_{\ell\neq k} |\Cov(\theta_k,\theta_\ell)|\; \E\,| \langle \mathbf{j}_k(x), \mathbf{j}_\ell(x)\rangle |,
\end{equation}
so disparity in $\E\|\mathbf{j}_k(x)\|^2$ across different domains $D_d$ still forces disparity in $s_k^{(d)}$ unless all correlations conspire to cancel it, which a non-generic event.

\end{enumerate}
\subsection{Reading the Claim as a Testable Criterion}

\paragraph{Equivalences used in practice.}
By Lemma~\ref{lem:iff},
\[
c_k \text{ large } \Longleftrightarrow \Var_d\!\Big(\E_{x\sim D_d}\|\mathbf{j}_k(x)\|_2^2\Big) \text{ large}.
\]
Thus, ranking parameters by $c_k$ is the same as ranking them by how much their Jacobian energy changes across domains.


\paragraph{Conclusion.}
Sensitivity index $s_k^{(d)}$ is the \emph{``variance share''} of domain $D_d$ attributable to infinitesimal changes in $\theta_k$. If this share is stable across domains, $\theta_k$ behaves as domain-invariant; if not, it betrays dependence on domain-specific variation. Therefore, a large $c_k$ is a certificate of domain-specificity.

\vspace{1em}


\section{Additional Results}

In this section, we show detailed results the main results in the main text. The detailed results are shown in Tables \ref{table:pacs-detailed}, \ref{table:vlcs-detailed}, \ref{table:officehome-detailed}, \ref{table:terrainc-detailed}, and \ref{table:domainnet-detailed}. Standard errors for the baseline methods are reported from three trials, if available.

\vspace{1em}

\begin{table*}[b]
\centering
    \caption{{Out-of-domain accuracies (\%) on {PACS}.}}
    \label{table:pacs-detailed}

\begin{tabular}{lllll|c}
\toprule
\textbf{Algorithm} & \textbf{A} & \textbf{C} & \textbf{P} & \textbf{S} & \textbf{Avg} \\
\midrule
MMD              & 86.1\scriptsize{$\pm1.4$} & 
79.4\scriptsize{$\pm0.9$} & 
96.6\scriptsize{$\pm0.2$} & 
76.5\scriptsize{$\pm0.5$} & 
84.7         \\
MLDG              & 85.5\scriptsize{$\pm1.4$} &
80.1\scriptsize{$\pm1.7$} & 
97.4\scriptsize{$\pm0.3$} & 
76.6\scriptsize{$\pm1.1$} & 
84.9      \\
IRM         & 84.8\scriptsize{$\pm1.3$}      & 
76.4\scriptsize{$\pm1.1$}      & 
96.7\scriptsize{$\pm0.6$}      & 
76.1\scriptsize{$\pm1.0$}      & 
83.5          \\
Mixstyle   & 
86.8\scriptsize{$\pm0.5$} & 
79.0\scriptsize{$\pm1.4$} & 
96.6\scriptsize{$\pm0.1$} & 
78.5\scriptsize{$\pm2.3$} & 
85.2           \\

MTL & 87.5\scriptsize{$\pm0.8$} & 
77.1\scriptsize{$\pm0.5$} & 
96.4\scriptsize{$\pm0.8$} & 
77.3\scriptsize{$\pm1.8$} &  
84.6          \\

GroupDRO   & 83.5\scriptsize{$\pm0.9$}   & 
79.1\scriptsize{$\pm0.6$}   & 
96.7\scriptsize{$\pm0.7$}   & 
78.3\scriptsize{$\pm2.0$}   & 
84.4           \\

ARM             & 86.8\scriptsize{$\pm0.6$} & 
76.8\scriptsize{$\pm0.5$} & 
97.4\scriptsize{$\pm0.3$} & 
79.3\scriptsize{$\pm1.2$} & 
85.1           \\

VREx  & 
86.0\scriptsize{$\pm1.6$} & 
79.1\scriptsize{$\pm0.6$} & 
96.9\scriptsize{$\pm0.5$} & 
77.7\scriptsize{$\pm1.7$} & 
84.9         \\
Mixup             & 86.1\scriptsize{$\pm0.5$} & 
78.9\scriptsize{$\pm0.8$} & 
97.6\scriptsize{$\pm0.1$} & 
75.8\scriptsize{$\pm1.8$} & 
84.6        \\

SagNet          & 87.4\scriptsize{$\pm0.2$} & 
80.7\scriptsize{$\pm0.5$} & 
97.1\scriptsize{$\pm0.1$} & 
80.0\scriptsize{$\pm1.0$} & 
86.3         \\

CORAL             & 88.3\scriptsize{$\pm0.2$} & 
80.0\scriptsize{$\pm0.5$} & 
97.5\scriptsize{$\pm0.3$} & 
78.8\scriptsize{$\pm1.3$} & 
86.2       \\

RSC              & 85.4\scriptsize{$\pm0.9$} & 
79.7\scriptsize{$\pm0.5$} & 
97.6\scriptsize{$\pm0.9$} & 
78.2\scriptsize{$\pm1.0$} & 
85.2   \\

ERM & 
85.7\scriptsize{$\pm0.6$} & 
77.1\scriptsize{$\pm0.8$} & 
97.4\scriptsize{$\pm0.4$} & 
76.6\scriptsize{$\pm0.7$} & 
84.2 \\

SAM      & 85.6\scriptsize{$\pm2.1$} & 
80.9\scriptsize{$\pm1.2$} & 
97.0\scriptsize{$\pm0.4$} & 
79.6\scriptsize{$\pm1.6$} & 
85.8   \\

GSAM      & 86.9\scriptsize{$\pm0.1$} & 
80.4\scriptsize{$\pm0.2$} & 
97.5\scriptsize{$\pm0.0$} & 
78.7\scriptsize{$\pm0.8$} & 
85.9  \\

SAGM     & 87.4\scriptsize{$\pm0.2$} & 
80.2\scriptsize{$\pm0.3$} & 
98.0\scriptsize{$\pm0.2$} & 
80.8\scriptsize{$\pm0.6$} & 
86.6   \\
GGA     & 88.8\scriptsize{$\pm0.2$} & 
80.1\scriptsize{$\pm0.3$} & 
97.3\scriptsize{$\pm0.2$} & 
81.2\scriptsize{$\pm0.5$} & 
87.3   \\
\midrule
\textbf{DSP-Reg}                 & 
86.9\scriptsize{$\pm0.3$} & 
82.7\scriptsize{$\pm0.2$} & 
99.0\scriptsize{$\pm0.4$} &  
81.4\scriptsize{$\pm0.5$} &  
87.5  \\
\bottomrule

\end{tabular}
\end{table*}

\begin{table*}[b]
\centering
    \caption{{Out-of-domain accuracies (\%) on VLCS.}}
    \label{table:vlcs-detailed}
\begin{tabular}{lllll|c}
\toprule
\textbf{Algorithm} & \textbf{C} & \textbf{L} & \textbf{S} & \textbf{V} & \textbf{Avg} \\
\midrule
MMD  & 97.7\scriptsize{$\pm0.1$} & 64.0\scriptsize{$\pm1.1$} & 72.8\scriptsize{$\pm0.2$} & 75.3\scriptsize{$\pm3.3$} & 77.5 \\

MLDG  & 97.4\scriptsize{$\pm0.2$} & 65.2\scriptsize{$\pm0.7$} & 71.0\scriptsize{$\pm1.4$} & 75.3\scriptsize{$\pm1.0$} & 77.2 \\
GroupDRO  & 97.3\scriptsize{$\pm0.3$} & 63.4\scriptsize{$\pm0.9$} & 69.5\scriptsize{$\pm0.8$} & 76.7\scriptsize{$\pm0.7$} & 76.7 \\

MTL & 97.8\scriptsize{$\pm0.4$} & 64.3\scriptsize{$\pm0.3$} & 71.5\scriptsize{$\pm0.7$} & 75.3\scriptsize{$\pm1.7$} & 77.2 \\

Mixup & 98.3\scriptsize{$\pm0.6$} & 64.8\scriptsize{$\pm1.0$} & 72.1\scriptsize{$\pm0.5$} & 74.3\scriptsize{$\pm0.8$} & 77.4 \\

ARM  & 98.7\scriptsize{$\pm0.2$} & 63.6\scriptsize{$\pm0.7$} & 71.3\scriptsize{$\pm1.2$} & 76.7\scriptsize{$\pm0.6$} & 77.6 \\

SagNet  & 97.9\scriptsize{$\pm0.4$} & 64.5\scriptsize{$\pm0.5$} & 71.4\scriptsize{$\pm1.3$} & 77.5\scriptsize{$\pm0.5$} & 77.8 \\

Mixstyle  & 98.6\scriptsize{$\pm0.3$} & 64.5\scriptsize{$\pm1.1$} & 72.6\scriptsize{$\pm0.5$} & 75.7\scriptsize{$\pm1.7$} & 77.9 \\

VREx  & 98.4\scriptsize{$\pm0.3$} & 64.4\scriptsize{$\pm1.4$} & 74.1\scriptsize{$\pm0.4$} & 76.2\scriptsize{$\pm1.3$} & 78.3 \\

IRM  & 98.6\scriptsize{$\pm0.1$} & 64.9\scriptsize{$\pm0.9$} & 73.4\scriptsize{$\pm0.6$} & 77.3\scriptsize{$\pm0.9$} & 78.6 \\

CORAL  & 98.3\scriptsize{$\pm0.3$} & 66.1\scriptsize{$\pm0.6$} & 73.4\scriptsize{$\pm0.3$} & 77.5\scriptsize{$\pm1.0$} & 78.8 \\

RSC  & 97.9\scriptsize{$\pm0.1$} & 62.5\scriptsize{$\pm0.7$} & 72.3\scriptsize{$\pm1.2$} & 75.6\scriptsize{$\pm0.8$} & 77.1 \\

ERM & 98.0\scriptsize{$\pm0.3$} & 64.7\scriptsize{$\pm1.2$} & 71.4\scriptsize{$\pm1.2$} & 75.2\scriptsize{$\pm1.6$} & 77.3 \\
GSAM & 98.7\scriptsize{$\pm0.3$} & 64.9\scriptsize{$\pm0.2$} & 74.3\scriptsize{$\pm0.0$} & 78.5\scriptsize{$\pm0.8$} & 79.1 \\

SAM & 99.1\scriptsize{$\pm0.2$} & 65.0\scriptsize{$\pm1.0$} & 73.7\scriptsize{$\pm1.0$} & 79.8\scriptsize{$\pm0.1$} & 79.4 \\

SAGM & 99.0\scriptsize{$\pm0.2$} & 65.2\scriptsize{$\pm0.4$} & 75.1\scriptsize{$\pm0.3$} & 80.7\scriptsize{$\pm0.8$} & 80.0 \\

GGA & 99.1\scriptsize{$\pm0.2$} & 67.5\scriptsize{$\pm0.6$} & 75.1\scriptsize{$\pm0.3$} & 78.0\scriptsize{$\pm0.1$} & 79.9 \\
\midrule
\textbf{DSP-Reg} & 99.0\scriptsize{$\pm0.3$} & 67.7\scriptsize{$\pm0.5$} & 75.6\scriptsize{$\pm0.4$} & 78.2\scriptsize{$\pm0.1$} & 80.1 \\

\bottomrule
\end{tabular}
\end{table*}

\begin{table*}[]
\centering
    \caption{{Out-of-domain accuracies (\%) on OfficeHome.}}
    \label{table:officehome-detailed}
\begin{tabular}{lllll|c}
\toprule
\textbf{Algorithm} & \textbf{A} & \textbf{C} & \textbf{P} & \textbf{R} & \textbf{Avg} \\
\midrule
MMD               & 60.4\scriptsize{$\pm0.2$} & 53.3\scriptsize{$\pm0.3$} & 74.3\scriptsize{$\pm0.1$} & 77.4\scriptsize{$\pm0.6$} & 66.4 \\
MLDG           & 61.5\scriptsize{$\pm0.9$} & 53.2\scriptsize{$\pm0.6$} & 75.0\scriptsize{$\pm1.2$} & 77.5\scriptsize{$\pm0.4$} & 66.8 \\

GroupDRO  & 60.4\scriptsize{$\pm0.7$} & 52.7\scriptsize{$\pm1.0$} & 75.0\scriptsize{$\pm0.7$} & 76.0\scriptsize{$\pm0.7$} & 66.0 \\
Mixstyle    & 51.1\scriptsize{$\pm0.3$} & 53.2\scriptsize{$\pm0.4$} & 68.2\scriptsize{$\pm0.7$} & 69.2\scriptsize{$\pm0.6$} & 60.4 \\
IRM          & 58.9\scriptsize{$\pm2.3$} & 52.2\scriptsize{$\pm1.6$} & 72.1\scriptsize{$\pm2.9$} & 74.0\scriptsize{$\pm2.5$} & 64.3 \\
ARM           & 58.9\scriptsize{$\pm0.8$} & 51.0\scriptsize{$\pm0.5$} & 74.1\scriptsize{$\pm0.1$} & 75.2\scriptsize{$\pm0.3$} & 64.8 \\

MTL & 61.5\scriptsize{$\pm0.7$} & 52.4\scriptsize{$\pm0.6$} & 74.9\scriptsize{$\pm0.4$} & 76.8\scriptsize{$\pm0.4$} & 66.4 \\
VREx         & 60.7\scriptsize{$\pm0.9$} & 53.0\scriptsize{$\pm0.9$} & 75.3\scriptsize{$\pm0.1$} & 76.6\scriptsize{$\pm0.5$} & 66.4 \\

Mixup  & 62.4\scriptsize{$\pm0.8$} & 54.8\scriptsize{$\pm0.6$} & 76.9\scriptsize{$\pm0.3$} & 78.3\scriptsize{$\pm0.2$} & 68.1 \\
SagNet           & 63.4\scriptsize{$\pm0.2$} & 54.8\scriptsize{$\pm0.4$} & 75.8\scriptsize{$\pm0.4$} & 78.3\scriptsize{$\pm0.3$} & 68.1 \\
CORAL            & 65.3\scriptsize{$\pm0.3$} & 54.4\scriptsize{$\pm0.6$} & 76.5\scriptsize{$\pm0.3$} & 78.4\scriptsize{$\pm1.0$} & 68.7 \\

RSC              & 60.7\scriptsize{$\pm1.4$} & 51.4\scriptsize{$\pm0.3$} & 74.8\scriptsize{$\pm1.1$} & 75.1\scriptsize{$\pm1.3$} & 65.5 \\

ERM     & 63.1\scriptsize{$\pm0.3$} & 51.9\scriptsize{$\pm0.4$} & 77.2\scriptsize{$\pm0.5$} & 78.1\scriptsize{$\pm0.2$} & 67.6 \\
GSAM       & 64.9\scriptsize{$\pm0.1$} & 55.2\scriptsize{$\pm0.2$} & 77.8\scriptsize{$\pm0.0$} & 79.2\scriptsize{$\pm0.2$} & 69.3 \\
SAM         & 64.5\scriptsize{$\pm0.3$} & 56.5\scriptsize{$\pm0.2$} & 77.4\scriptsize{$\pm0.1$} & 79.8\scriptsize{$\pm0.4$} & 69.6 \\
SAGM      & 65.4\scriptsize{$\pm0.4$} & 57.0\scriptsize{$\pm0.3$} & 78.0\scriptsize{$\pm0.3$} & 80.0\scriptsize{$\pm0.2$} & 70.1 \\

GGA      & 64.3\scriptsize{$\pm0.1$} & 54.4\scriptsize{$\pm0.2$} & 76.5\scriptsize{$\pm0.3$} & 78.9\scriptsize{$\pm0.2$} & 68.5 \\

\midrule
\textbf{DSP-Reg}                            & 65.3\scriptsize{$\pm0.3$} & 55.3\scriptsize{$\pm0.4$} & 77.4\scriptsize{$\pm0.3$} & 79.7\scriptsize{$\pm0.2$} & 69.4 \\
\bottomrule
\end{tabular}
\end{table*}

\begin{table*}[]
\centering
    \caption{{Out-of-domain accuracies (\%) on TerraIncognita.}}
    \label{table:terrainc-detailed}
\begin{tabular}{lllll|c}
\toprule
\textbf{Algorithm} & \textbf{L100} & \textbf{L38} & \textbf{L43} & \textbf{L46} & \textbf{Avg} \\
\midrule
MMD        & 41.9\scriptsize{$\pm3.0$} & 34.8\scriptsize{$\pm1.0$} & 57.0\scriptsize{$\pm1.9$} & 35.2\scriptsize{$\pm1.8$} & 42.2 \\
MLDG       & 54.2\scriptsize{$\pm3.0$} & 44.3\scriptsize{$\pm1.1$} & 55.6\scriptsize{$\pm0.3$} & 36.9\scriptsize{$\pm2.2$} & 47.8 \\
GroupDRO & 41.2\scriptsize{$\pm0.7$} & 38.6\scriptsize{$\pm2.1$} & 56.7\scriptsize{$\pm0.9$} & 36.4\scriptsize{$\pm2.1$} & 43.2 \\
Mixstyle   & 54.3\scriptsize{$\pm1.1$} & 34.1\scriptsize{$\pm1.1$} & 55.9\scriptsize{$\pm1.1$} & 31.7\scriptsize{$\pm2.1$} & 44.0 \\
ARM     & 49.3\scriptsize{$\pm0.7$} & 38.3\scriptsize{$\pm0.7$} & 55.8\scriptsize{$\pm0.8$} & 38.7\scriptsize{$\pm1.3$} & 45.5 \\
MTL & 49.3\scriptsize{$\pm1.2$} & 39.6\scriptsize{$\pm6.3$} & 55.6\scriptsize{$\pm1.1$} & 37.8\scriptsize{$\pm0.8$} & 45.6 \\

VREx   & 48.2\scriptsize{$\pm4.3$} & 41.7\scriptsize{$\pm1.3$} & 56.8\scriptsize{$\pm0.8$} & 38.7\scriptsize{$\pm3.1$} & 46.4 \\
IRM        & 54.6\scriptsize{$\pm1.3$} & 39.8\scriptsize{$\pm1.9$} & 56.2\scriptsize{$\pm1.8$} & 39.6\scriptsize{$\pm0.8$} & 47.6 \\
CORAL        & 51.6\scriptsize{$\pm2.4$} & 42.2\scriptsize{$\pm1.0$} & 57.0\scriptsize{$\pm1.0$} & 39.8\scriptsize{$\pm2.9$} & 47.6 \\

Mixup  & 59.6\scriptsize{$\pm2.0$} & 42.2\scriptsize{$\pm1.4$} & 55.9\scriptsize{$\pm0.8$} & 33.9\scriptsize{$\pm1.4$} & 47.9 \\
SagNet      & 53.0\scriptsize{$\pm2.0$} & 43.0\scriptsize{$\pm1.4$} & 57.9\scriptsize{$\pm0.8$} & 40.4\scriptsize{$\pm1.4$} & 48.6 \\
ERM  & 49.8\scriptsize{$\pm4.4$} & 42.1\scriptsize{$\pm1.4$} & 56.9\scriptsize{$\pm1.8$} & 35.7\scriptsize{$\pm3.9$} & 46.1 \\
SAM   & 46.3\scriptsize{$\pm1.0$} & 38.4\scriptsize{$\pm2.4$} & 54.0\scriptsize{$\pm1.0$} & 34.5\scriptsize{$\pm0.8$} & 43.3 \\
RSC           & 50.2\scriptsize{$\pm2.2$} & 39.2\scriptsize{$\pm1.4$} & 56.3\scriptsize{$\pm1.4$} & 40.8\scriptsize{$\pm0.6$} & 46.6 \\
GSAM  & 50.8\scriptsize{$\pm0.1$} & 39.3\scriptsize{$\pm0.2$} & 59.6\scriptsize{$\pm0.0$} & 38.2\scriptsize{$\pm0.8$} & 47.0 \\
SAGM    & 54.8\scriptsize{$\pm1.3$} & 41.4\scriptsize{$\pm0.8$} & 57.7\scriptsize{$\pm0.6$} & 41.3\scriptsize{$\pm0.4$} & 48.8 \\

GGA    & 55.9\scriptsize{$\pm0.1$} & 45.5\scriptsize{$\pm0.1$} & 59.7\scriptsize{$\pm0.1$} & 41.5\scriptsize{$\pm0.1$} & 50.6 \\
\midrule
\textbf{DSP-Reg}                       & 55.8\scriptsize{$\pm0.1$} & 45.8\scriptsize{$\pm0.1$} & 59.5\scriptsize{$\pm0.1$} & 41.9\scriptsize{$\pm0.1$} & 50.7 \\
\bottomrule
\end{tabular}
\end{table*}

\begin{table*}[]
\centering
    \caption{{Out-of-domain accuracies (\%) on {DomainNet}.}}
    \label{table:domainnet-detailed}
\begin{tabular}{lllllll|c}
\toprule
\textbf{Algorithm} & \textbf{clip} & \textbf{info} & \textbf{paint} & \textbf{quick} & \textbf{real} & \textbf{sketch} & \textbf{Avg} \\
\midrule

MMD    & 
32.1 \scriptsize$\pm13.3$ & 
11.0 \scriptsize$\pm4.6$ & 
26.8 \scriptsize$\pm11.3$ & 
8.7 \scriptsize$\pm2.1$ & 
32.7 \scriptsize$\pm13.8$ & 
28.9 \scriptsize$\pm11.9$ & 23.4 \\

MLDG & 
59.1 \scriptsize$\pm0.2$ & 
19.1 \scriptsize$\pm0.3$ & 
45.8 \scriptsize$\pm0.7$ & 
13.4 \scriptsize$\pm0.3$ & 
59.6 \scriptsize$\pm0.2$ & 
50.2 \scriptsize$\pm0.4$ & 41.2 \\

GroupDRO & 
47.2 \scriptsize$\pm0.5$ & 
17.5 \scriptsize$\pm0.4$ & 
33.8 \scriptsize$\pm0.5$ & 
9.3 \scriptsize$\pm0.3$ & 
51.6 \scriptsize$\pm0.4$ & 
40.1 \scriptsize$\pm0.6$ & 33.3 \\

Mixstyle  & 
51.9 \scriptsize$\pm0.4$ & 
13.3 \scriptsize$\pm0.2$ & 
37.0 \scriptsize$\pm0.5$ & 
12.3 \scriptsize$\pm0.1$ & 
46.1 \scriptsize$\pm0.3$ & 
43.4 \scriptsize$\pm0.4$ & 34.0 \\

IRM   & 
48.5 \scriptsize$\pm2.8$ & 
15.0 \scriptsize$\pm1.5$ & 
38.3 \scriptsize$\pm4.3$ & 
10.9 \scriptsize$\pm0.5$ & 
48.2 \scriptsize$\pm5.2$ & 
42.3 \scriptsize$\pm3.1$ & 33.9 \\

VREx  & 
47.3 \scriptsize$\pm3.5$ & 
16.0 \scriptsize$\pm1.5$ & 
35.8 \scriptsize$\pm4.6$ & 
10.9 \scriptsize$\pm0.3$ & 
49.6 \scriptsize$\pm4.9$ & 
42.0 \scriptsize$\pm3.0$ & 33.6 \\

ARM  & 
49.7 \scriptsize$\pm0.3$ & 
16.3 \scriptsize$\pm0.5$ & 
40.9 \scriptsize$\pm1.1$ & 
9.4 \scriptsize$\pm0.1$ & 
53.4 \scriptsize$\pm0.4$ & 
43.5 \scriptsize$\pm0.4$ & 35.5 \\

MTL & 
57.9 \scriptsize$\pm0.5$ & 
18.5 \scriptsize$\pm0.4$ & 
46.0 \scriptsize$\pm0.1$ & 
12.5 \scriptsize$\pm0.1$ & 
59.5 \scriptsize$\pm0.3$ & 
49.2 \scriptsize$\pm0.1$ & 40.6 \\
Mixup & 
55.7\scriptsize$\pm0.3$ & 
18.5\scriptsize$\pm0.5$ & 
44.3\scriptsize$\pm0.5$ & 
12.5\scriptsize$\pm0.4$ & 
55.8\scriptsize$\pm0.3$ & 
48.2\scriptsize$\pm0.5$ & 39.2 \\

SagNet  & 
57.7 \scriptsize$\pm0.3$ & 
19.0 \scriptsize$\pm0.2$ & 
45.3 \scriptsize$\pm0.3$ & 
12.7 \scriptsize$\pm0.5$ & 
58.1 \scriptsize$\pm0.5$ & 
48.8 \scriptsize$\pm0.2$ & 40.3 \\

CORAL & 
59.2 \scriptsize$\pm0.1$ & 
19.7 \scriptsize$\pm0.2$ & 
46.6 \scriptsize$\pm0.3$ & 
13.4 \scriptsize$\pm0.4$ &
59.8 \scriptsize$\pm0.2$ & 
50.1 \scriptsize$\pm0.6$ & 41.5 \\

ERM & 
63.0 \scriptsize$\pm0.2$ & 
21.2 \scriptsize$\pm0.2$ & 
50.1 \scriptsize$\pm0.4$ & 
13.9 \scriptsize$\pm0.5$ & 
63.7 \scriptsize$\pm0.2$ &
52.0 \scriptsize$\pm0.5$ & 43.8 \\

RSC                & 
55.0\scriptsize{$\pm1.2$} & 
18.3\scriptsize{$\pm0.5$} & 
44.4\scriptsize{$\pm0.6$} &
12.2\scriptsize{$\pm0.2$} & 
55.7\scriptsize{$\pm0.7$} &
47.8\scriptsize{$\pm0.9$} &
38.9 \\

SAM   & 
64.5\scriptsize$\pm0.3$  & 
20.7\scriptsize$\pm0.2$  & 
50.2\scriptsize$\pm0.1$  & 
15.1\scriptsize$\pm0.3$  & 
62.6\scriptsize$\pm0.2$  & 
52.7\scriptsize$\pm0.3$  &  44.3   \\

GSAM   & 
64.2\scriptsize$\pm0.3$  & 
20.8\scriptsize$\pm0.2$  & 
50.9\scriptsize$\pm0.0$  & 
14.4\scriptsize$\pm0.8$  & 
63.5\scriptsize$\pm0.2$  &
53.9\scriptsize$\pm0.2$  &  44.6   \\

SAGM    & 
64.9\scriptsize{$\pm0.2$}  & 
21.1\scriptsize{$\pm0.3$}  & 
51.5\scriptsize{$\pm0.2$}  & 
14.8\scriptsize{$\pm0.2$}  & 
64.1\scriptsize{$\pm0.2$}  & 
53.6\scriptsize{$\pm0.2$}  & 45.0 \\

GGA    & 
64.0\scriptsize{$\pm0.2$}  & 
22.2\scriptsize{$\pm0.3$}  & 
51.7\scriptsize{$\pm0.2$}  & 
14.3\scriptsize{$\pm0.2$}  & 
64.1\scriptsize{$\pm0.2$}  & 
54.3\scriptsize{$\pm0.2$}  & 45.2 \\
\midrule

\textbf{DSP-Reg} & 
64.7\scriptsize{$\pm0.2$} & 
22.3\scriptsize{$\pm0.3$} & 
52.1\scriptsize{$\pm0.1$} & 
14.6\scriptsize{$\pm0.2$} & 
64.6\scriptsize{$\pm0.4$} & 
54.7\scriptsize{$\pm0.3$} & 45.6 \\

\bottomrule
\end{tabular}
\end{table*}

\end{document}